\def\eqref#1{equation~\ref{#1}}
\def\1{\bm{1}}
\DeclareMathAlphabet{\mathsfit}{\encodingdefault}{\sfdefault}{m}{sl}
\SetMathAlphabet{\mathsfit}{bold}{\encodingdefault}{\sfdefault}{bx}{n}
\newcommand{\E}{\mathbb{E}}
\newcommand{\Var}{\mathrm{Var}}
\newcommand{\indep}{\rotatebox[origin=c]{90}{$\models$}}
\newcommand*{\matb}[1]{\begin{bmatrix}#1\end{bmatrix}}
\newtheorem{theorem}{Theorem}[section]
\newtheorem{lemma}{Lemma}[section]
\algnewcommand{\Input}[1]{%
	\State \textbf{Input:}
	\Statex \hspace*{\algorithmicindent}\parbox[t]{.8\linewidth}{\raggedright #1}
}
\algnewcommand{\Output}[1]{%
	\State \textbf{Output:}
	\Statex \hspace*{\algorithmicindent}\parbox[t]{.8\linewidth}{\raggedright #1}
}\algrenewcommand\alglinenumber[1]{\tiny #1:}
\title{Multi-View Independent Component Analysis\\ with Shared and Individual Sources}
\author[1,2]{\href{mailto:<t.p.pandeva@uva.nl>?Subject=Your Paper}{Teodora~Pandeva}{}}
\author[1]{Patrick~Forré}
\affil[1]{%
    AI4Science\\
    AMLab\\
    University of Amsterdam
}
\affil[2]{%
    Swammerdam Institute for Life Sciences\\
     University of Amsterdam
}
\begin{document}
\maketitle

\begin{abstract}
  Independent component analysis (ICA) is a blind source separation method for linear disentanglement of independent latent sources from observed data. We investigate the special setting of noisy linear ICA where the observations are split among different views, each receiving a mixture of shared and individual sources. We prove that the corresponding linear structure is identifiable, and the sources distribution  can be recovered. To computationally estimate the sources, we optimize a constrained form of the joint log-likelihood of the observed data among all views. We also show empirically that our objective recovers the sources also in the case when the measurements are corrupted by noise. Furthermore, we propose a model selection procedure for recovering the number of shared sources which we verify empirically. Finally, we apply the proposed model in a challenging real-life application, where the estimated shared sources from two large transcriptome datasets (observed data) provided by two different labs (two different views) lead to recovering  (shared) sources utilized for finding a plausible representation of the underlying graph structure.
\end{abstract}

\section{Introduction}\label{sec:intro}
Independent component analysis (ICA) is a method for solving blind source separation (BSS) problems  \citep{comon1994independent} where the goal is to separate independent latent sources from mixed observed signals and, thus, uncover essential  structures in various data types. Historically, linear ICA has proven to be a successful approach for recovering spatially independent sources representing brain activity regions from magnetoencephalography (MEG) data \citep{vigario1997independent} or functional MRI (fMRI) data \citep{mckeown1998independent}.   The utility of ICA is not only limited to neuroscience, but it has a wide range of applications in omics data analysis, e.g. \citep{zheng2008gene,nazarov2019deconvolution,zhou2018data,tan2020independent,urzua2021improving,rusan2020granular,cary2020application, dubois2019refining,aynaud2020transcriptional}.  In these works, the interpretation of the latent sources relies on the assumption that each experimental outcome is a linear mixture of independent biological processes (the sources). For example, the latent sources could represent gene profiles that are used to predict gene regulation \citep{sastry2021independent,sastry2019escherichia} or cell-type specific expressions from tumor samples \citep{avila2018computational} for studying cell-type decompositions in cancer research. 

The fast advancement of technology in the biomedical domain has provided a unique opportunity to find valuable insights from large-scale data integration studies. Many of these applications can be transformed into multiview BSS problems. A significant body of research has been devoted to developing multiview ICA methods focused on unraveling group-level (shared) brain activity patterns in multi-subject fMRI and EEG datasets \citep{salman2019group,huster2015group,congedo2010group, durieux2019partitioning, congedo2010group,calhoun2001method}. However, these methods cannot be applied directly to problems where one is interested in retrieving both shared and view-specific signals, e.g. investigating the individual-specific brain functions (view-specific) and shared phenotypes  patterns in individuals' brain activity in a natural stimuli experiment \citep{dubois2016building,bartolomeo2017botallo}. Another application, where the estimation of both shared and view-specific sources  is essential, is omics data integration.  A typical example is combining heterogeneous gene expression data sets for achieving better gene regulation discovery. In this scenario, the observed samples are realizations of diverse and complex experiments. The shared information between the datasets refer to genes with stable expression across almost all conditions and the individual signals represent experiment-specific gene activities  such as measurements of gene knock-outs, stress conditions, etc.

\textbf{Summary.} To address these and similar scientific applications, we formalize the described multi-view BSS problem as a linear noisy generative model for a multi-view data regime, assuming that the mixing matrix and number of individual sources are view-specific. We call the resulting model, ShIndICA. By requiring that the sources are non-Gaussian and mutually independent and the linear mixing matrices have full column rank, we provide identifiability  guarantees for the mixing matrices and latent sources in distribution. We maximize the joint log-likelihood of the observed views to estimate the mixing matrices. Furthermore, we provide a model selection criterion for selecting the correct number of shared sources. Finally, we apply ShIndICA on a data integration  problem of two large transcriptome datasets. We show empirically that our method works well compared to the baselines  when the estimated components are used for a graph inference task.
 
 \textbf{Contributions.} Our contributions can be summarized as follows:
\begin{enumerate}
\item We propose a new multi-view generative BSS model with shared and individual sources, called ShIndICA.
    \item We provide theoretical guarantees for the identifiability of the recovered linear structure and the source and noise distributions.
    \item We derive the closed form joint likelihood of ShIndICA which is used for estimating the mixing matrices.
    \item We propose a selection criterion for inferring the correct number of shared sources derived from the generative model assumptions.
\end{enumerate}

\section{Problem Formalization}
\label{sec:method}
Consider the following $D$-view multivariate linear BSS model where for $ d\in\{1,\ldots, D\}$
\begin{empheq}[box=\tcbhighmath]{align}
   x_d = A_d(\tilde{s}_d +\epsilon_d)=A_{d0}s_0+A_{d1}s_d+A_d\epsilon_d,
    \label{eq:model}
\end{empheq}
and it holds that
\begin{enumerate}
    \item $x_d\in\mathbb{R}^{k_d}$ is a random vector with $\mathbb{E}[x_d]=0,$ 
    \item $\tilde{s}_d = (s_0^\top,s_d^\top)^\top $ are latent non-Gaussian random sources  with $s_0\in\mathbb{R}^c$ and $s_d\in\mathbb{R}^{k_d-c}$  being the shared and  individual sources and $\E[\tilde{s}_d]=0$ and $\Var[\tilde{s}_d] = \mathbb{I}_{k_d},$
    \item $A_d \in \mathbb{R}^{k_d\times k_d}$ is a mixing matrix with full column rank, $A_{d0}$ and $A_{d1}$ are the columns corresponding to the shared and individual sources,
    \item $\epsilon_d \sim \mathcal{N}(0,\sigma^2 \mathbb{I}_{k_d})$ is Gaussian noise,
    \item   all latent source components and noise variables are mutually independent.
\end{enumerate}

 \tikzset{every picture/.style={line width=0.75pt}} 
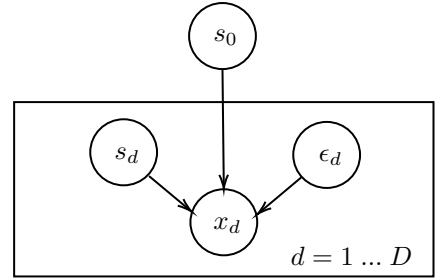
\begin{figure}
\centering
\begin{tikzpicture}[x=0.5pt,y=0.5pt,yscale=-1,xscale=1]

\draw   (266,63) .. controls (266,49.19) and (277.19,38) .. (291,38) .. controls (304.81,38) and (316,49.19) .. (316,63) .. controls (316,76.81) and (304.81,88) .. (291,88) .. controls (277.19,88) and (266,76.81) .. (266,63) -- cycle ;
\draw   (192,151) .. controls (192,137.19) and (203.19,126) .. (217,126) .. controls (230.81,126) and (242,137.19) .. (242,151) .. controls (242,164.81) and (230.81,176) .. (217,176) .. controls (203.19,176) and (192,164.81) .. (192,151) -- cycle ;
\draw   (344,153) .. controls (344,139.19) and (355.19,128) .. (369,128) .. controls (382.81,128) and (394,139.19) .. (394,153) .. controls (394,166.81) and (382.81,178) .. (369,178) .. controls (355.19,178) and (344,166.81) .. (344,153) -- cycle ;
\draw   (267,204) .. controls (267,190.19) and (278.19,179) .. (292,179) .. controls (305.81,179) and (317,190.19) .. (317,204) .. controls (317,217.81) and (305.81,229) .. (292,229) .. controls (278.19,229) and (267,217.81) .. (267,204) -- cycle ;
\draw    (291,88) -- (291.98,177) ;
\draw [shift={(292,179)}, rotate = 269.37] [color={rgb, 255:red, 0; green, 0; blue, 0 }  ][line width=0.75]    (10.93,-3.29) .. controls (6.95,-1.4) and (3.31,-0.3) .. (0,0) .. controls (3.31,0.3) and (6.95,1.4) .. (10.93,3.29)   ;
\draw    (236,169) -- (266.47,194.71) ;
\draw [shift={(268,196)}, rotate = 220.16] [color={rgb, 255:red, 0; green, 0; blue, 0 }  ][line width=0.75]    (10.93,-3.29) .. controls (6.95,-1.4) and (3.31,-0.3) .. (0,0) .. controls (3.31,0.3) and (6.95,1.4) .. (10.93,3.29)   ;
\draw    (350,170) -- (318.53,196.71) ;
\draw [shift={(317,198)}, rotate = 319.69] [color={rgb, 255:red, 0; green, 0; blue, 0 }  ][line width=0.75]    (10.93,-3.29) .. controls (6.95,-1.4) and (3.31,-0.3) .. (0,0) .. controls (3.31,0.3) and (6.95,1.4) .. (10.93,3.29)   ;
\draw   (135,113) -- (449,113) -- (449,245) -- (135,245) -- cycle ;

\draw (282,56) node [anchor=north west][inner sep=0.75pt]   [align=left] {$\displaystyle s_0$};
\draw (208,145) node [anchor=north west][inner sep=0.75pt]   [align=left] {$\displaystyle s_{d}$};
\draw (361,147) node [anchor=north west][inner sep=0.75pt]   [align=left] {$\displaystyle \epsilon _{d}$};
\draw (282,198) node [anchor=north west][inner sep=0.75pt]   [align=left] {$\displaystyle x_{d}$};
\draw (340,220) node [anchor=north west][inner sep=0.75pt]   [align=left] {$\displaystyle d=1\ ...\ D$};
\end{tikzpicture}
\caption{A graphical representation of Equation \ref{eq:model} where $x_d$ is the observed variable, $s_0$ denotes the shared sources, $s_d$ the view-specific ones and $\epsilon_d$ is the Gaussian noise.}
\label{fig:model}
\end{figure}

Note that for $D=1$ the model becomes a standard linear ICA model which is solved by \citet{comon1994independent,hyvarinen2000independent,bell1995information} for independent non-Gaussian latent sources $z:=\tilde{s}_1+\epsilon_1$. The Gaussian noise in Equation \ref{eq:model} can be interpreted as a measurement error on the device with variance $\sigma^2 A_dA_d^\top$ (similarly to \citep{richard2020modeling,richard2021shared}). We choose this setting compared to the $A_d\tilde{s}_d+\epsilon_d$ because we can derive  a joint data  likelihood in a closed form (see Section \ref{sec:loss}) which is not available in the latter representation. Moreover, assumption~$5$ implies that the noise is  not expected to influence the true signal and vice versa which is a common assumption in measurement error models known as classical errors. See Figure \ref{fig:model} for a graphical representation of Equation~\ref{eq:model}.

 \section{Identifiability Results}
 \label{sec:ident}
In unsupervised machine learning methods, the reliability of the algorithm cannot be directly verified outside of simulations due to the non-existence of labels. For this reason, theoretical guarantees are necessary to trust that the algorithm estimates the quantities of interest. For a BSS problem solution, such as ICA, we want the sources and mixing matrices to be (up to certain equivalence relations) unambiguously determined (or \textit{identifiable}) by the data, at least in the large sample limit.
 
 Identifiability results for noiseless single-view ICA are proved by \citep{comon1994independent}. It turns out that if at most one of the latent sources is normal and the mixing matrix is invertible, then both the mixing matrix and sources can be recovered \emph{almost surely} up to permutation, sign and scaling. However, this result does not hold in the general additive noise setting.  \cite{davies2004identifiability} shows that if the mixing matrix has a full column rank, then the structure is identifiable, but not the latent sources. 

By employing the multi-view ($D\geq 2$)  noisy setting inspired from our model (see Equation \ref{eq:model}), we extend the results by \cite{comon1994independent,davies2004identifiability,kagan1973characterization,richard2020modeling}. Compared to previous work, we provide identifiability guarantees not only for the  mixture matrices up to sign and permutation, but also for the \emph{source and noise distributions} (up to the same sign and permutation), and the latent (both shared and individual) sources dimensions\footnote{Note that the identifiability of the source distributions is a weaker notion of identifiability than the almost surely one (i.e. recovering the exact sources) in the noiseless case~\citep{comon1994independent}.}. Moreover, our identifiability results hold for a more general case than Equation \ref{eq:model} since the noise distribution can be view-specific, and the mixing matrices can be non-square.  This is stated in the following Theorem  \ref{cor:identmultiview}, proved in Appendix~A:

 \begin{theorem}

\label{cor:identmultiview}

  Let $x_1, \ldots, x_D$ for $D\geq 2$ be random vectors with the following two representations:
 \begin{align*}
  A_{d}^{(1)} \Big(\matb{s_0^{(1)}\\s_d^{(1)}}+\epsilon_d^{(1)}\Big)  = x_d = A_{d}^{(2)}\Big( \matb{s_0^{(2)}\\s_d^{(2)}}+\epsilon_d^{(2)}\Big),
 \end{align*}
 where $d\in\{1,\ldots, D\}$, with the following properties for $i=1,2$
 \begin{enumerate}
     \item $A_d^{(i)} \in \mathbb{R}^{p_d\times {k_d^{(i)}}}$ is a (non-random) matrix with full column rank, i.e. $\operatorname{rank}(A_d^{(i)})=k_d^{(i)},$
     \item $\epsilon_d^{(i)}\in \mathbb{R}^{k_d^{(i)}}$ and $\epsilon_d^{(i)}\sim \mathcal{N}(0,\sigma_d^{(i)2}\mathbb{I}_{k_d^{(i)}})$ is a $k_d^{(i)}$-variate normal random variable,
      \item $\tilde{s}_d^{(i)} = (s_0^{(i)\top}, s_d^{(i)\top})^\top$ with $s_0^{(i)} \in \mathbb{R}^{c^{(i)}}$ and $s_d^{(i)} \in \mathbb{R}^{k_d^{(i)}-c^{(i)}}$ is a random vector such that:
        \begin{enumerate}
            \item the components of $\tilde{s}_d^{(i)}$ are mutually independent and each of them is a.s. a non-constant random variable,
            \item $\tilde{s}_d^{(i)}$ is non-normal with $0$ mean and unit variance.
        \end{enumerate}
        
    \item $\epsilon_d^{(i)}$ is independent from $s_0^{(i)}$ and $s_d^{(i)}$: $\epsilon_d^{(i)} \indep s_0^{(i)}$ and  $\epsilon_d^{(i)} \indep s_d^{(i)}$.
 \end{enumerate}

 Then,  the number of shared sources is identifiable, i.e. $c^{(1)}=c^{(2)}=:c$ and for all $d=1, \ldots, D$ we get that $k_d^{(1)}=k_d^{(2)}=:k_d,$  and there exist a sign matrix $\Gamma_d$ and a permutation matrix  $P_d\in \mathbb{R}^{k_d\times k_d}$ such that:
 \begin{align*}
     A^{(2)}_d = A^{(1)}_dP_d\Gamma_d,
 \end{align*}
 and furthermore the source and noise distributions are identifiable, i.e.
 \begin{align*}
     \matb{s_0^{(2)}\\s_d^{(2)}} &\sim \Gamma_d^{-1} P_d^{-1} \matb{s_0^{(1)}\\s_d^{(1)}}, & 
   \sigma_d^{(2)}  &= \sigma_d^{(1)}.
 \end{align*}

 \end{theorem}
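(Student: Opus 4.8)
The plan is to separate the argument into a \emph{second-order} phase, which fixes the dimensions $k_d$ and the number of shared sources $c$ and reduces each view to a square problem, and a \emph{characteristic-function} phase, which fixes the linear structure and the distributions. Throughout I write $z_d := \tilde s_d^{(1)}+\epsilon_d^{(1)}$ for the reduced latent vector of view $d$.

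First I would exploit the covariance of each view. Since sources and noise are independent with $\Var[\tilde s_d^{(i)}]=\mathbb I_{k_d^{(i)}}$ and $\Var[\epsilon_d^{(i)}]=\sigma_d^{(i)2}\mathbb I$, one gets $\Cov(x_d)=(1+\sigma_d^{(i)2})A_d^{(i)}A_d^{(i)\top}$, whose rank equals $k_d^{(i)}$; as the left side is representation-independent this yields $k_d^{(1)}=k_d^{(2)}=:k_d$ and $\operatorname{colspace}(A_d^{(1)})=\operatorname{colspace}(A_d^{(2)})$. Hence $A_d^{(2)}=A_d^{(1)}W_d$ for an invertible $W_d$, and left-inverting the full-column-rank $A_d^{(1)}$ collapses the identity to the square relation $z_d = W_d(\tilde s_d^{(2)}+\epsilon_d^{(2)})$. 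Matching covariances there forces $W_dW_d^\top=\rho_d^2\mathbb I$ with $\rho_d^2=(1+\sigma_d^{(1)2})/(1+\sigma_d^{(2)2})$, i.e.\ $W_d=\rho_d O_d$ with $O_d$ orthogonal. For the shared count I would use the \emph{cross}-covariance of two views: independence across views makes every individual-source and noise term drop out, leaving $\Cov(x_d,x_{d'})=A_{d0}^{(i)}A_{d'0}^{(i)\top}$, a full-rank factorization of rank $c^{(i)}$; again representation-independence of the left side gives $c^{(1)}=c^{(2)}=:c$.

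Next I would pass to second characteristic functions near the origin. Writing $\Psi$ for $\log$ of a characteristic function and splitting each unit-variance source as $\Psi_{\tilde s_{d,j}^{(i)}}(u)=-\tfrac12u^2+R_j^{(i)}(u)$ with $R_j^{(i)}$ the genuinely non-quadratic (non-Gaussian) remainder, the square relation gives, after the quadratic parts cancel automatically, the separable identity $\sum_j R_j^{(1)}(t_j)=\sum_j R_j^{(2)}(\rho_d(O_d^\top t)_j)$. Applying mixed partials $\partial_{t_a}\partial_{t_b}$, $a\neq b$, annihilates the separable left side, and after substituting $u=\rho_dO_d^\top t$ one obtains $\sum_j (O_d)_{aj}(O_d)_{bj}R_j^{(2)\prime\prime}(u_j)=0$; since each $R_j^{(2)}$ is non-quadratic, its higher derivatives are not identically zero, so each product $(O_d)_{aj}(O_d)_{bj}$ must vanish. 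This is the Darmois--Skitovich/Kagan--Linnik--Rao mechanism: every column of the orthogonal $O_d$ carries a single nonzero entry, so $O_d=P_d\Gamma_d$ is a signed permutation and $A_d^{(2)}=A_d^{(1)}\rho_dP_d\Gamma_d$, while the matched non-Gaussian parts satisfy $R_{j}^{(1)}(u)=R_{\pi_d(j)}^{(2)}(\rho_d u)$.

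It remains to show $\rho_d=1$ (equivalently $\sigma_d^{(1)}=\sigma_d^{(2)}$), which I expect to be the main obstacle, because for a \emph{single} view the noise level is genuinely not identifiable: absorbing a Gaussian factor of any admissible size into the unit-variance sources produces an alternative representation with a different $\rho_d$. The resolution is that the noise is view-specific and independent across views while the shared sources are common, so cross-view statistics are noise-free. Concretely, mixed derivatives $\partial_{t_a}\partial_{t'_b}$ of the joint log-characteristic function of $(x_d,x_{d'})$ retain only the shared-source term $\sum_l (A_{d0}^{(i)})_{al}(A_{d'0}^{(i)})_{bl}\Psi_{s_{0,l}^{(i)}}''(\cdots)$, giving a \emph{noiseless} two-view ICA problem whose Comon-type identifiability pins the shared-source distributions exactly, with no Gaussian-trade freedom. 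Matching this exact recovery against the within-view relation $R^{(1)}(u)=R^{(2)}(\rho_d u)$ on a shared component forces $R^{(2)}(u)=R^{(2)}(\rho_d u)$, and non-Gaussianity of that component (so $R^{(2)}$ is non-constant) then forces $\rho_d=1$. With $\rho_d=1$ we get $W_d=P_d\Gamma_d$ exactly, hence $A_d^{(2)}=A_d^{(1)}P_d\Gamma_d$ and $\sigma_d^{(2)}=\sigma_d^{(1)}$; isotropy of the noise carries the now-fixed level into the individual directions, and the cumulant identity with $\rho_d=1$ upgrades to equality of full distributions, $(s_0^{(2)\top},s_d^{(2)\top})^\top\sim\Gamma_d^{-1}P_d^{-1}(s_0^{(1)\top},s_d^{(1)\top})^\top$. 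Two technical points must be handled carefully: second characteristic functions exist and are smooth only near the origin, so the differentiation arguments must be localized with Marcinkiewicz's theorem ruling out spurious polynomial/Gaussian solutions; and the bookkeeping that the within-view permutation $\pi_d$ sends shared components to shared components, which holds because a reduced coordinate's dependence on another view is intrinsic and noise-free.
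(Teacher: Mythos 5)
Your proposal is correct in substance, but it takes a genuinely different route from the paper's proof, and the comparison is instructive. The paper proceeds view by view: it first invokes a single-view identifiability theorem built on Kagan--Linnik--Rao (its Theorem A.2) to get $A_d^{(2)}=A_d^{(1)}P_d\Lambda_d$ with $\Lambda_d$ diagonal, then proves $c^{(1)}=c^{(2)}$ by contradiction (a shared source of one representation matched to \emph{individual} sources of the other in two different views would force a dependence that violates cross-view independence), then fixes the scales purely by second-order bookkeeping --- within-view covariance gives $\Lambda_d^2\propto\mathbb{I}$, cross-view covariance gives $\Lambda_d[c,c]\Lambda_l[c,c]=\mathbb{I}_c$ --- and finally divides the global characteristic-function identity by the nonvanishing Gaussian factor to identify the source laws. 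You instead (i) obtain $k_d$ and $c$ from the ranks of the within- and cross-view covariances, (ii) re-derive the signed-permutation structure of $O_d$ from scratch via the mixed-partials (Darmois--Skitovich) argument on second characteristic functions rather than citing Kagan--Linnik--Rao, and (iii) fix the residual scale $\rho_d$ by a cross-view argument that genuinely uses non-Gaussianity. Point (iii) is more than stylistic: the paper's second-order scale-fixing yields only the pairwise relations $\rho_d\rho_l=1$ for $d\neq l$, which pin all $\rho_d=1$ when $D\geq 3$ but leave the one-parameter family $\rho_1\rho_2=1$ (with correspondingly rescaled noise variances) unresolved when $D=2$; your mechanism --- noise-free cross-view relations plus the observation that $R^{(2)}(u)=R^{(2)}(\rho u)$ near the origin forces $\rho=1$ for a non-Gaussian component --- closes exactly this case, so your treatment of the hardest step is more complete than the paper's own write-up (an alternative quick fix is Cram\'er's decomposition theorem applied to the difference of the two views' reduced shared coordinates). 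What the paper's route buys is brevity, by outsourcing the hard analytic work to a cited classical theorem; what yours buys is a self-contained argument valid for all $D\geq 2$.

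Two repairs are needed in your final step. First, the ``upgrade to equality of full distributions'' cannot rest on the cumulant identity alone: second characteristic functions live only near the origin, and two characteristic functions agreeing on a neighbourhood of $0$ need not agree everywhere. Use instead, as the paper does, that the almost-sure identity $\tilde{s}_d^{(1)}+\epsilon_d^{(1)}=P_d\Gamma_d\bigl(\tilde{s}_d^{(2)}+\epsilon_d^{(2)}\bigr)$ gives a characteristic-function identity on all of $\mathbb{R}^{k_d}$, and divide by the nowhere-vanishing Gaussian factor once $\sigma_d^{(1)}=\sigma_d^{(2)}$ is established. Second, the appeal to ``Comon-type identifiability'' of the noiseless cross-view problem should be replaced by the concrete computation you sketch: the mixed partials $\partial_{t_a}\partial_{t'_b}$ of the joint log-characteristic function retain only the shared-source terms, and since each surviving term depends on its argument $a_l^{\top}t+b_l^{\top}t'$ only, alignment of the gradient directions forces $\rho_d=\rho_{d'}$, which together with $\rho_d\rho_{d'}=1$ gives $\rho_d=1$. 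With these patches the argument is complete.
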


 Note that the requirement $D\geq 2$ is essential for the identifiability of the non-Gaussian latent source  and noise distributions. In contrast, in the single-view case, \cite{kagan1973characterization} shows that we cannot identify any arbitrary non-Gaussian source distribution unless  we impose an additional constraint on the latent sources to  have non-normal components (e.g., see Theorem A.2 )\footnote{  A random variable $x$ is said to have non-normal components if for every representation $x\sim v +w$ with $v \indep w$, then $v$ and $w$ are non-normal.}. 
 
 Moreover, a necessary assumption for the identifiability of the linear structure is the non-normality of the latent sources,  which is a standard assumption in the ICA literature \citep{comon1994independent} as we stated above. In the more restrictive  multi-view shared ICA case, \cite{richard2021shared} shows  that  the sources can be Gaussian  if we impose additional assumptions about the diversity of the noise distributions. However, this is not applicable in our case since we  do not make these assumptions for our model.
 \section{Joint Data Log-Likelihood}
\label{sec:loss}
Here, we derive  the joint log-likelihood of the observed views which we use for estimating the mixing matrices. Following the standard ICA approaches \citep{bell1995information,hyvarinen2000independent}, instead of optimizing directly for the mixing matrices $A_d$, we estimate their inverses $W_d=A_d^{-1}$, called unmixing matrices. 

Let $z_d:=W_dx_d=\tilde{s}_d +\epsilon_d,$ and $z_{d,0}:=s_0 +\epsilon_{d0}\in \mathbb{R}^{c}$ and $z_{d,1}:=s_d +\epsilon_{d1} \in \mathbb{R}^{k_d-c},$ i.e. $z_d = (z_{d,0}^{\top}, z_{d,1}^{\top})^\top$ are the estimated noisy sources of the $d$-th view. Furthermore, let $p_{Z_{d,1}}$ be the probability distribution of $z_{d,1}$ and $|W_d|=|\det W_d|$. Then we can derive the the data log-likelihood of  Equation \ref{eq:model} for $N$ observed samples per view (proved  in Appendix~B), which is given by
\begin{align}
\label{eq:unconstrained}
&\mathcal{L}(W_1, \ldots, W_D) =\sum_{i=1}^N\Big(\log f(\bar{s}_0^i) +  \sum_{d=1}^D \log  p_{Z_{d,1}}(z_{d,1}^{i})\Big) \\ \notag
  &  - \frac{1}{2\sigma^2} \sum_{d=1}^D  \Big( \operatorname{trace}(  Z_{d,0}Z_{d,0}^{\top}) -\frac{1}{D} \sum_{l=1}^D \operatorname{trace}(  Z_{d,0}Z_{l,0}^{\top}) \Big)\\ \notag
  &+ N\sum_{d=1}^D \log\vert W_d\vert +C 
\end{align}
where  $Z_{d,0}\in\mathbb{R}^{c\times N}$ for $d=1,\ldots, D$ is the data matrix that stores $N$ observations of $z_{d,0}$, We estimate the shared sources via $\bar{s}_0^{i}=\sum_{d=1}^D z_{d,0}^{i}/D$ with probability distribution  $f(\bar{s}_0) = \int \exp\Big(-\dfrac{D\Vert s_0-\bar{s}_0\Vert^2} {2\sigma^2}\Big)p_{S_0}(s_0)ds_0$.

 We further simplify the loss function by assuming that the data matrices $X_1\in\mathbb{R}^{k_1\times N}, \ldots, X_D\in\mathbb{R}^{k_D\times N}$ are whitened. That consists  of linearly transforming the random variables' realizations $x_d$ such that the resulting variable $\tilde{x}_d = K_dx_d$ has  uncorrelated components, i.e. unit variance, $\mathbb{E}[\tilde{x}_d\tilde{x}_d^\top ]=\mathbb{I}_{k_d}$, where $K_d$ is the whitening matrix. This step transforms the mixing matrix to an orthogonal one $\tilde{A}_d$.
  
In the new optimization problem after whitening,  we aim to find orthogonal unmixing matrices $\tilde{W}_d=\tilde{A}_d^\top$ such that  they maximize the transformed data log-likelihood:

 \begin{align}
\label{eq:opt}
 \mathcal{L}(\tilde{W}_1, \ldots, \tilde{W}_D) &\propto\sum_{i=1}^N \Big(\log f_{\sigma}(\tilde{\bar{s}}_0^i) +\sum_{d=1}^D \log  p_{\tilde{Z}_{d,1}}(\tilde{z}_{d,1}^{i}) \Big)\\ \notag  
 & + \frac{1+\sigma^2}{2D\sigma^2}\sum_{d=1}^D \sum_{l=1}^D \operatorname{trace}(  \tilde{Z}_{d,0}\tilde{Z}_{l,0}^{\top}  ),
\end{align}
where analogously to Equation \ref{eq:unconstrained}: $\tilde{z}_{d}=(\tilde{z}_{d,0}^{\top}, \tilde{z}_{d,1}^{\top})^\top =\tilde{W}_d \tilde{x}_d$ and $\tilde{\bar{s}}_0^{i}=\sum_{d=1}^D \tilde{z}_{d,0}^{i}/D$. Note that after whitening we have $\operatorname{trace}(  \tilde{Z}_{d,0}\tilde{Z}_{d,1}^\top)=c$ and $|\tilde{W}_d|=1$ are constants and thus vanish from Equation \ref{eq:opt} (see Appendix~B for detailed derivations).  The first line of Equation \ref{eq:opt} represents the sources log-likelihoods and the second line has the role of a regularization term for finding the shared information between the views. 
In our work,  Equation \ref{eq:opt} is used for the parameter estimation where  both the density of the shared and individual sources $f_{\sigma}(\cdot)$ and $p_{\tilde{Z}_{d,1}}$ are approximated by a nonlinear function $g(s)$, e.g. $g(s)=-\log\cosh(s)$ for super-Gaussian  or $ g(s) = e^{-s^2/2}$ for sub-Gaussian sources. Moreover, we treat the noise variance $\sigma^2$ as a Lagrange multiplier via the relation $\lambda=\frac{1+\sigma^2}{\sigma^2}.$ Finally, after training we compute the mixing matrices $\hat{A}_d$ by setting $\hat{A}_d=K^{-1}_d\tilde{W}_d$.  Thus, we recover the true ones $A_d$ up to scaling with $(1+\sigma^2)^{\frac{1}{2}}$, sign and column  permutation. 

\section{Model Selection}
\label{subsec:modsel}

By leveraging the data generation model assumptions, we can select the number of shared sources $c$ in a completely unsupervised way. More precisely, let $k<k_d$ for all $d=1,\ldots,D$ be a candidate for  $c$ which is unknown. Under the assumption that  $k$ is a correct guess (i.e. $k=c$), our generative model yields that  $\hat{z}_{d} = z_{d,0} - \frac{1}{D}\sum_{l=1}^D z_{l,0}$ is normally distributed with $0$ mean and variance $\frac{D-1}{D}\sigma^2 \mathbb{I}_{k}$ for each $d=1,\ldots, D,$ where  $z_{d,0}$ is defined as in Equation~\ref{eq:unconstrained}. We propose an evaluation metric, called normalized reconstruction error~(NRE), defined by the following relation to the log-likelihood of  $\hat{z}_{d}$:
 \begin{align*}
   \operatorname{NRE}(k)&:=  \sum_{d=1}^D \frac{\Vert \hat{z}_{d} \Vert^2}{ k}\stackrel{+}{\propto}- \sum_{d=1}^D \frac{\log p(\hat{z}_d)}{k}\\
   &=\sum_{d=1}^D \frac{D \Vert  \hat{z}_{d} \Vert^2}{2(D-1)\sigma^2 k}-\frac{\cancel{k}\log(2\pi\sigma^2)}{2\cancel{k}}.
 \end{align*}
 The two quantities differ by translation and multiplication with constants that do not depend on the parameter of interest $k.$ Thus, by minimizing $\operatorname{NRE}(k)$  we maximize the sum of the (normalized) log-likelihoods of the normal variables $\hat{z}_{d}$. Intuitively,  due to the normalization, $\operatorname{NRE}(k)$ can be interpreted as the average reconstruction error over the $k$ shared sources (summed over all views). This allows for a fair comparison of the NRE scores for different  $k$.

 We select an optimal parameter by employing the following procedure. First, we split the data (that applies for each view) into two disjunct sets $D_0$ and $D_1$, with not necessarily the same sample sizes.  We estimate the unmixing matrices for a fixed $k$ on $D_0$ (train set) and estimate the shared sources on the test data $D_1$. Then we compute the mean $\operatorname{NRE}(k)$ on the recovered test shared sources (not on the train set due to possible overfitting, see Section~\ref{sec:exp}). We repeat this for various $k$ and  we choose  the maximum of all $k$s that minimize NRE, i.e.
 \begin{align*}
     k^*=\max\{\arg\min_k \overline{\operatorname{NRE}}(k)\},
 \end{align*}
where 
\[\overline{\operatorname{NRE}}(k)=\frac{1}{N_1} \sum_{i\leq |D_1|} \operatorname{NRE}(k)_i=\frac{1}{|D_1|} \sum_{i\leq |D_1|} \frac{\Vert \hat{z}_{d}^i \Vert^2}{ k}\]
 is the average NRE score over all observed test samples in $D_1.$ The NRE score serves as a goodness of fit measure and indicates how well  the true shared sources are reconstructed from the test data. Due to the model fitting, we can get high-quality shared sources even when $k<<c$, as we will demonstrate this empirically. Thus, we prefer to select the highest possible $k$ for which the average shared sources reconstruction error is minimal.  
 
\section{Related Work}
  The existing body of work on linear multi-view BSS, inspired by the ICA literature, considers mostly \emph{shared} response model applications (i.e., no individual sources), some of them adopting a maximum likelihood approach  \citep{guo2008unified,richard2020modeling,richard2021shared} to model the noisy views of the proposed models. Other methods, such as independent vector analysis (IVA), relax the assumption about the shared sources by assuming that  they have the same first or highest order moments across view \citep{lee2008independent,anderson2011joint,anderson2014independent,engberg2016independent,via2011maximum}. Many of these approaches, such as Group ICA \citep{calhoun2001method},  shared response ICA (SR-ICA) \citep{zhang2016searchlight}, MultiViewICA \citep{richard2020modeling}, and ShICA\citep{richard2021shared},  incorporate a dimensionality reduction step for every view (CCA \citep{varoquaux2009canica,richard2021shared} or PCA) to extract the mutual signal between the multiple objects before applying an ICA procedure on the reduced data. However, there are no guarantees that the pre-processing procedure will entirely remove the influence of the object-specific sources on the transformed data.   In the ICA literature, there exist three methods for extracting shared and individual sources from data.  \cite{maneshi2016validation} proposes a heuristic way of using FastICA for the given task without discussing the identifiability of the results; \citep{long2020independent} suggests to apply ICA on each view separately followed by statistical analysis to separate the individual from the shared sources;   \citep{lukic2002ica} exploits temporal correlations rather than the non-Gaussianity of the sources and thus is not applicable in the context we are considering.

A common tool for analyzing multi-view data is canonical correlation analysis (CCA), initially proposed by \cite{hotelling1992relations}. It finds two datasets' projections that maximize the correlation between the projected variables. Gaussian-CCA \citep{bach2005probabilistic}, its kernelized version \citep{bach2002kernel} and deep learning \citep{andrew2013deep} formulations of the classical CCA problem aim to recover shared latent sources of variations from the multiple views. There are extensions of CCA that model the observed variables as a linear combination of group-specific and dataset-specific latent variables: estimated with Bayesian inference methods  \citep{klami2013bayesian} or exponential families with MCMC inference \citep{seppo2010bayesian}. However, most of them assume that the latent sources are Gaussian or non-linearly related to the observed data \citep{wang2016deep} and thus lack identifiability results. 

Existing non-linear multiview versions such as \citep{tian2020contrastive,federici2020learning} cannot recover both shared and individual signals across multiple measurements, and do not assure the identifiability of the proposed generative models. There are identifiable deep non-linear versions of ICA (e.g. \citep{hyvarinen2019nonlinear}) which can be employed for this task. However, their assumptions for achieving identifiability  are often hard to satisfy in real-life applications, especially in the biomedical domains with low-data regimes.

\section{Experiments}
\label{sec:exp}
 
\textbf{Model Implementation and Training.}  We used the python library \texttt{pytorch} \citep{paszke2017automatic} to implement our method. We model each view with a separate unmixing matrix. To impose  orthogonality constraints on the unmixing matrices, we made use  of the \texttt{geotorch} library, which is an the extension of \texttt{pytorch} \citep{lezcano2019trivializations}. The stochastic gradient-based method applied for training is L-BFGS. Before running any of the ICA-based methods (our or the baselines), we whiten every single view by performing PCA to speed up computation.   We estimate the mixing matrix up to scale (due to the whitening) and permutation (see Sections \ref{sec:ident} and \ref{sec:loss}). To force the algorithm to output the shared sources in the same order across all views we initialize the unmixing matrices by means of CCA. This follows from the fact that  the CCA weights are orthogonal matrices, and the transformed views' components are paired and ordered across views. For all conducted experiments, we fixed the parameter $\lambda$ from Equation \ref{eq:opt} to $1$.
\begin{figure}[tbh]
    \centering
    \includegraphics[scale=0.07]{ 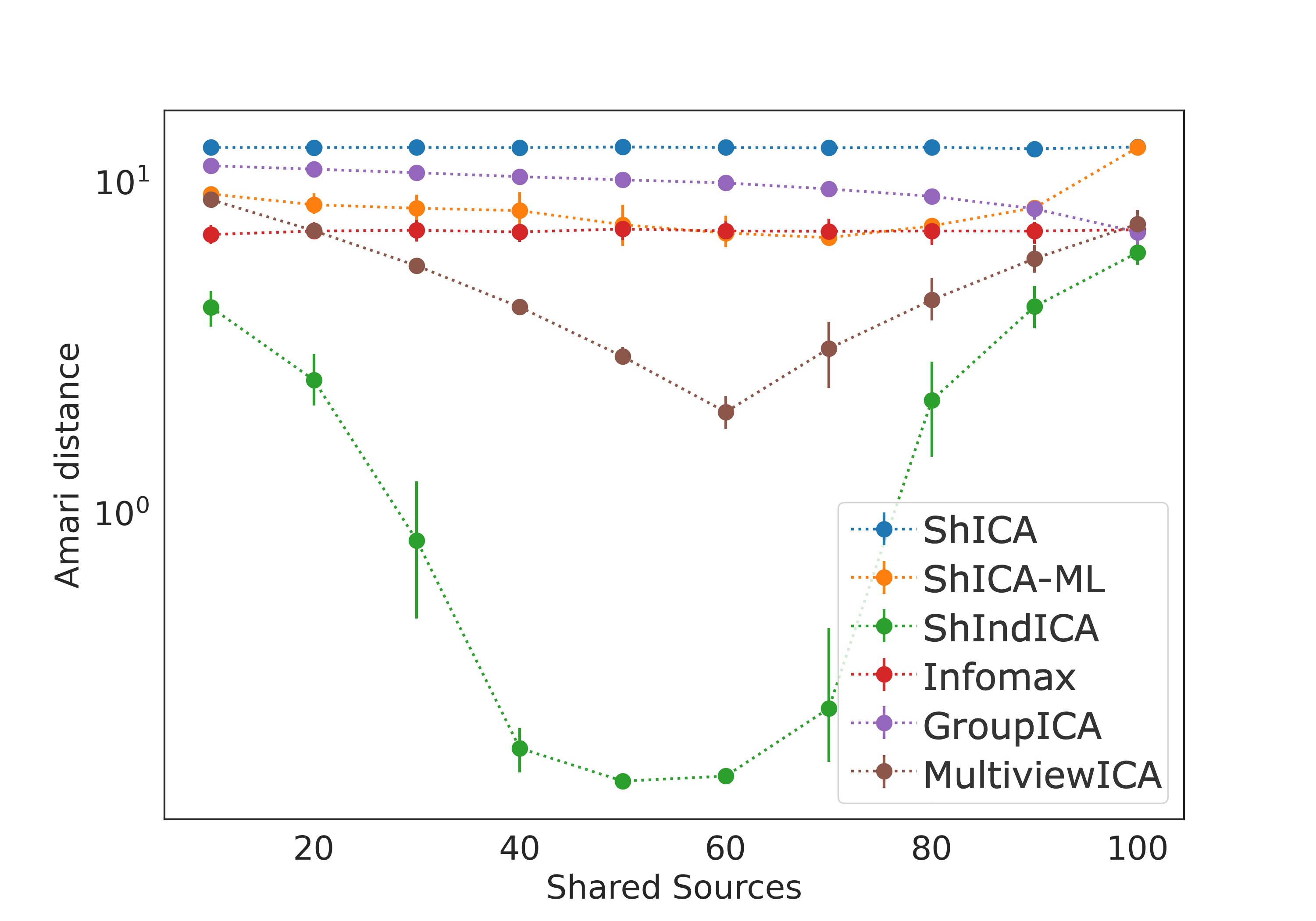}
    \caption{Comparison of ShIndICA (this paper) to ShICA, Infomax, GroupICA, MultiViewICA and ShICA-ML.  The datasets come from two different views with total number of sources 100 and sample size 1000. We vary the  number of the true number of shared sources from 10 to 100 (x-axis), which are considered to be known to the user before training.  We compute the Amari distance (y-axis) between the estimated unmixing matrices and ground truth (the lower the better) in each case. ShIndICA consistently outperforms all baselines.}
    \label{fig:noiseless}
\end{figure}

 \begin{figure*}
    \centering
    \includegraphics[scale=0.1]{ 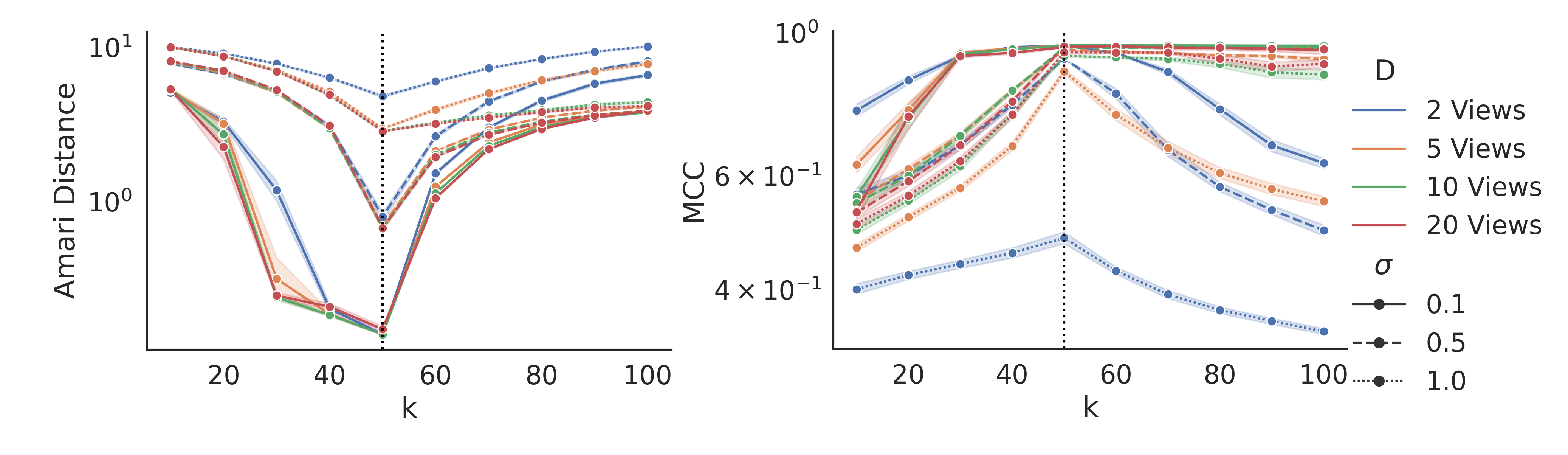}
    \caption{We generate data with 100 sources (50 shared annotated by a dashed line) for $D=2,5,10,20$ views and noise standard deviation $\sigma=0.1,0.5,1$. We train the model with varying $k$, (x-axis) and we compute the average Amari distance (left plot) and the MCC of the assumed shared sources compared to the ground truth (right plot). While the Amari distance suggests that we get the best mixing matrix estimates  when we "guessed" the right number of sources, the shared sources MCC plot shows that we can  estimate the true shared sources with high quality if $D$ is large enough also for overestimated $c$. }
    \label{fig:sharedsyn}
\end{figure*}

 \textbf{Baselines Implementation.}  We compare ShIndICA to the standard single-view ICA method Infomax \citep{ablin2018faster}.  To adapt it to the multi-view setting, we run Infomax on each view separately, and then we apply the Hungarian algorithm \citep{Kuhn55thehungarian} to match components from different views based on their cross-correlation. For the shared response model settings,  ShIndICA is compared to related methods such as MultiViewICA \cite{richard2020modeling}, ShICA, ShICA-ML \cite{richard2021shared}, and GroupICA as proposed by  \cite{richard2020modeling}. The latter involves a two-step pre-processing procedure, first whitening the data in the single views and then dimensionality reduction on the joint views.   For the data integration experiment we use a method based on partial least squares estimation, closely related to CCA, that extracts between-views correlated components and view-specific ones. This method is provided by the  \texttt{OmicsPLS} R package \cite{bouhaddani2018integrating} and is especially developed for data integration of omics data. We refer to this method as PLS.

\subsection{Synthetic Experiments}
\label{subsec:syn}

\textbf{Data Simulation.}  We simulated the data using the Laplace distribution $\exp(-\frac{1}{2}\vert x \vert)$, and the mixing matrices are sampled with normally distributed entries with mean $1$ and $0.1$ standard deviation. The realizations of the observed views are obtained according to the proposed model. In the different scenarios described below we vary the noise distribution.  We conducted each experiment $50$ times and based on that we provided  error bars in  all figures where applicable.  Additional experiments are provided in Appendix~D.2.

\textbf{Motivational Example: Noiseless Views.} This example illustrates the advantage of our method compared to the other multiview ICA methods for modelling view specific and individual sources. In Figure \ref{fig:noiseless}, we consider a noiseless view setting, where we fixed the dimension to be 100 and we vary the number of shared sources $c$ from 10 to 100 in a two view setting. We fit a model for every $c$ which is considered to be known. The quality of the mixing matrix estimation is measured  with the Amari distance \citep{amari1995new}, which cancels if the estimated matrix differs from the ground truth one up to scale and permutation.  We can see that as soon as the ratio of shared sources to individual sources gets around 1:1 we can recover the mixing matrices with very high accuracy (the Amari distance is almost 0) compared to the baseline methods which cannot perform well in this setting. Moreover, even in the case when all sources are shared, i.e. the baselines' model assumption is satisfied, our method performs as good as MultiViewICA which is a state of the art model designed for this  task.   More experiments on the noisy views are provided in Appendix~D.2.
\begin{figure}
         \centering
         \includegraphics[scale=0.06]{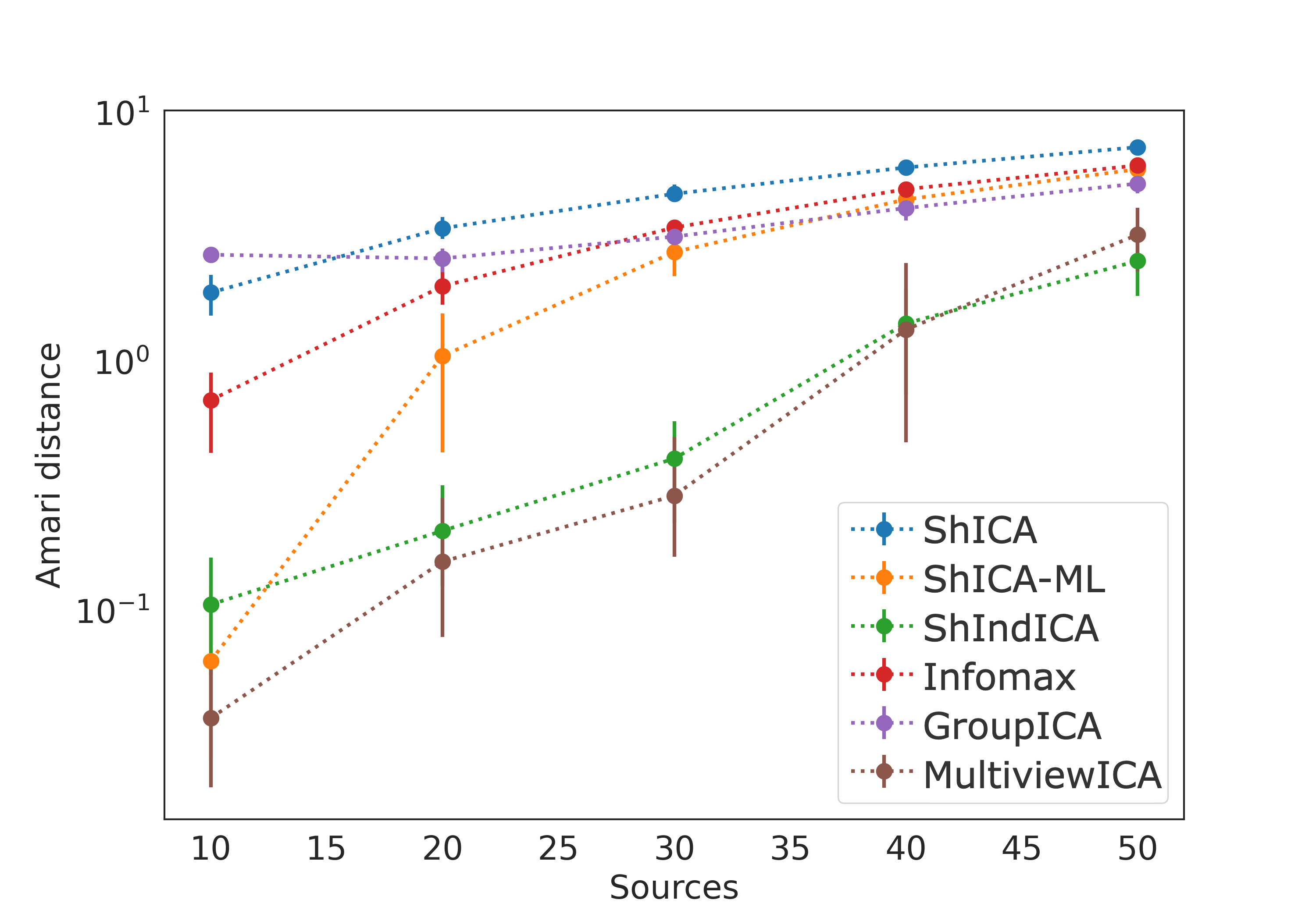}
         \caption{ The data is generated according to a model, where no individual sources are present and the noise per view is uniformly sampled from the interval $[1,2]$. The number of views is set to 10 and the sample size is 1000. We vary the number of  sources from 10 to 50 (y-axis). ShIndICA and MultiviewICA  have the best Amari distance compared to the other methods. }
         \label{fig:noisy}
     \end{figure}

\textbf{Shared Sources Estimation.}
 This experiment exemplifies the performance of ShIndICA if the number of sources is a priori unknown and specified by the user. We sample a data set with 50 shared and 50 individual sources from $D=2,5,10,20$ views and noise standard deviation $\sigma=0.1,0.5,1$. We vary the input number of shared sources from 10 to 100 and for each choice of this hyperparameter, we train a model on every dataset. The results are summarized in Figure \ref{fig:sharedsyn}, where the x-axis indicates the number of shared sources given for the training. The line colors and styles denote the number of views and noise distribution, respectively, used for the data generation. First, we assess the overall performance of ShIndICA in terms of the Amari distance of the estimated mixing matrices and ground truth ones. Figure \ref{fig:sharedsyn} (left plot) shows that the Amari distance is the lowest when we guess correctly $c$.
Furthermore, we assess the quality of the recovered shared sources (the average shared sources across all views $\bar{s}_0$) by computing the mean cross-correlation (MCC) between the estimates and the ground truth. That involves pairing the ground truth components with the estimated ones using the Hungarian algorithm and then computing the mean correlations between the aligned pairs. Figure \ref{fig:sharedsyn} (right plot) suggests that even in the high noise variance case, we can get high-quality estimates of the shared sources (high MCC scores) if there are enough views present. This also holds when we overestimate $c$. 

 \begin{figure*}[tbh]
    \centering
    \includegraphics[scale=0.08]{ 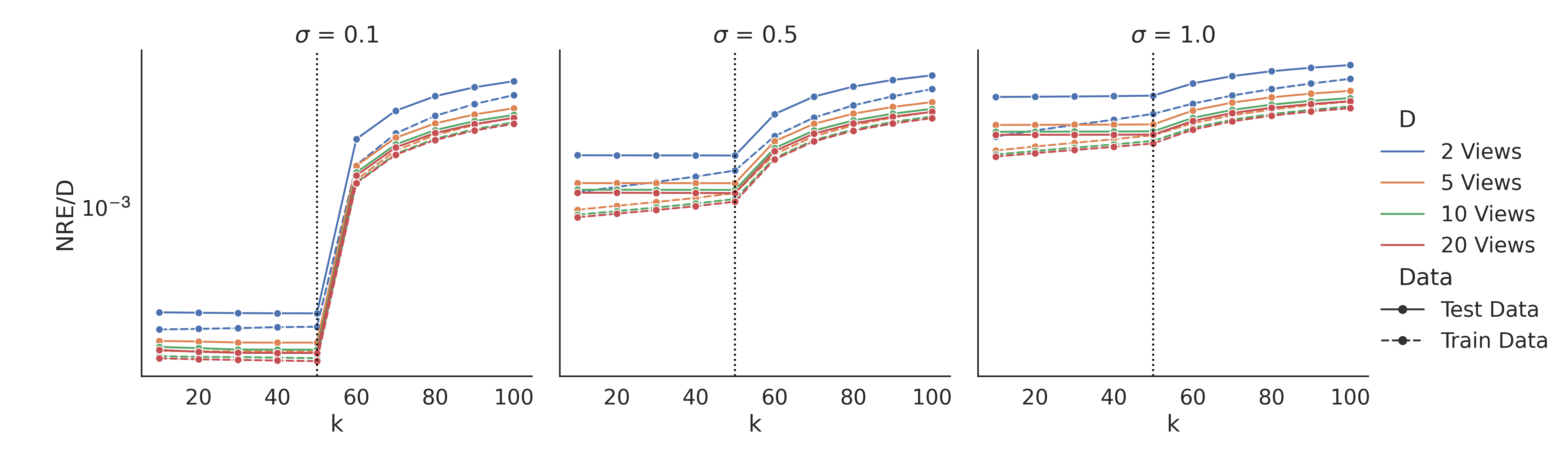}
    \caption{We generate data with 100 sources out of which 50 are shared (dashed line) for different views $D=2,5,10,20$ and noise variances $\sigma=0.1,0.5,1$. We compute the NRE on the test and train data for different candidates of  $c=10,\ldots,100$. If we "overestimate" the number of shared sources we see that the NRE score increases.}
    \label{fig:nse}
\end{figure*}

\textbf{Model Selection.}
The previous experiment suggests that the hyper-parameter $c$ is essential for the training and performance of ShIndICA. The NRE score, introduced in Section\ref{subsec:modsel}, serves as a goodness of fit measure for selecting the correct number of sources. We consider the same data generation models as before. Again, we trained each model with various shared sources $k$. Figure \ref{fig:nse} summarizes the results, where the x-axis refers to the hyper-parameter $k$. The y-axis is the corresponding NRE score on both train and test data (both with sample size =1000) indicated by the line style.  First, in all cases, we observe that NRE remains low if the number of sources is lower than the true ones and it increases as soon as we overestimate $c$, especially when the noise variance is low (left plot). Moreover, due to overfitting, the NRE score computed on the train data takes its minimum for the lowest $k$. In contrast, the NRE on the test data  that remains constant or for large D  even reaches its minimum at the correct number of sources.  Thus, it makes it more suitable for model selection than the NRE on the train data.

\textbf{Robustness to model misspecification in a shared response model application.} Here we want to investigate the robustness of our model  when the noise has a view-specific variance. To provide a fair comparison to the baseline methods, we apply our method to a shared response setting, i.e. no individual sources are available. For this experiment the view-specific variances are uniformly sampled from $[1,2]$, the number of view is $10$ and the number of sources varies from 10 to 50. Figure \ref{fig:noisy}   shows that  ShIndICA and  MultiViewICA  show consistently the best model performances (lowest Amari distance between estimates and ground truth matrices) compared to the other methods.

\subsection{Data Fusion of Transcriptome Data}
\label{sub:real}
\textbf{Background and Data Generation Assumptiom.} Transcriptome datasets are relevant for the field of genomics.  After preprocessing they have the form of  random data matrices, where each row correspond to a gene and each column refers to an experiment. Based on these datasets, scientists try to infer gene-gene interactions in the genome. Combining as many datasets as possible enables getting better gene regulatory predictions. This is a challenging task due to the batch effects (non-biological noise) in the data. We do a one-to-one translation of this data integration task to our proposed model by assuming that each view represents a different lab, each experiment is a noisy linear combination of independent gene pathways. We also assume that some gene pathways  get activated due to the specific experimental design (individual sources) and  others show stable activation level across all experiments (shared sources). 

\textbf{Datasets.} In this example, we consider the bacterium \textit{B. subtilis}, for which a very rich collection of the discovered gene-gene interactions are publicly available, which we use as our ground truth model in the graph inference task. Our goal is to "denoise" and combine two  publicly available datasets  \citep{arrieta2015experimentally, nicolas2012condition}. Each of the datasets contain gene expression levels of about 4000 genes measured across more than 250 experimental outcomes. For detailed description of the datasets see Appendix C.

\begin{figure}
    \centering
    \includegraphics[scale=0.35]{ 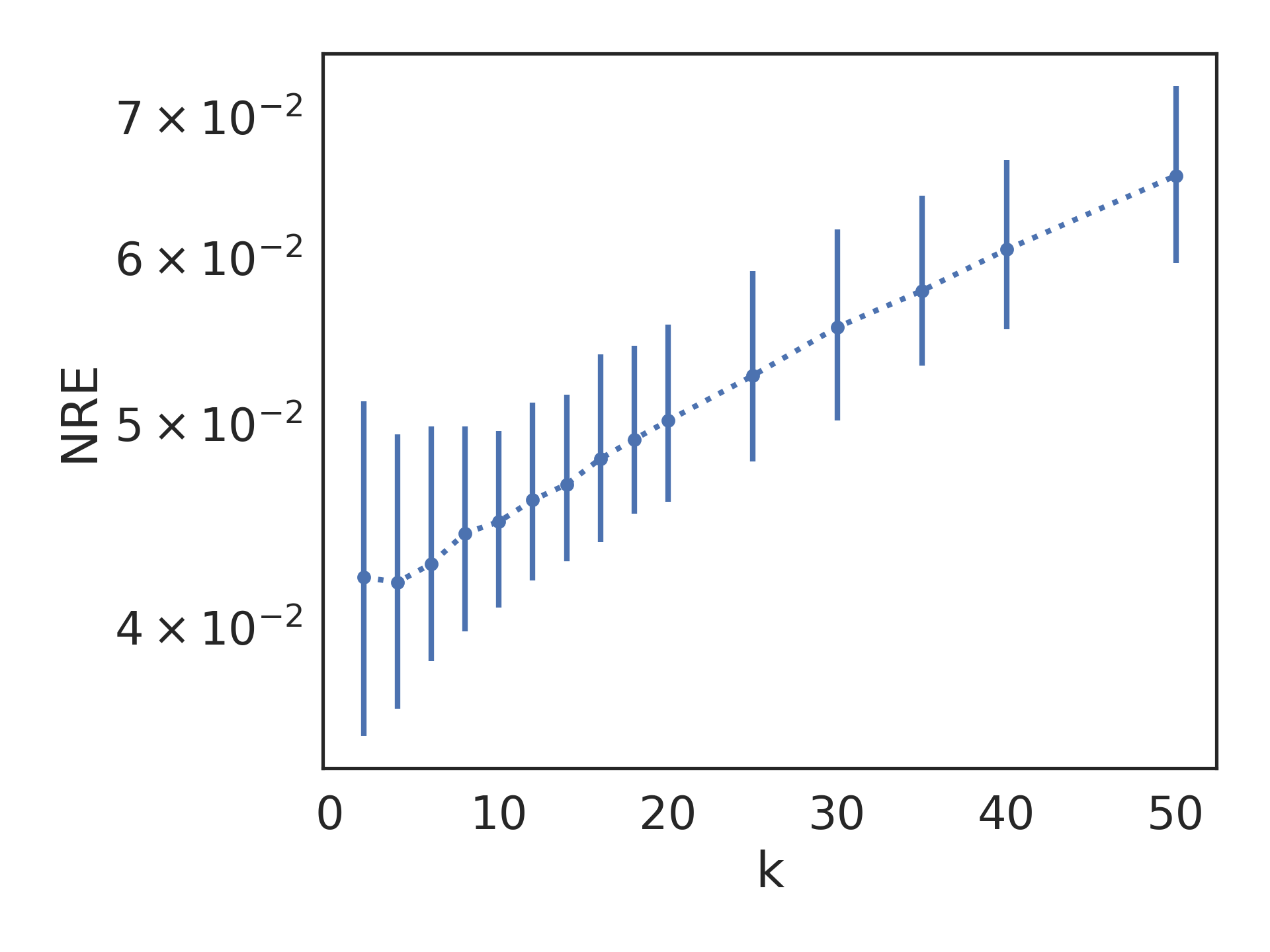}
    \caption{The NRE score computed on test data for the transcriptome data for various $k$. This procedure was repeated $50$ times and the error bars represent the estimated $95\%$ confidence interval.}
    \label{fig:omicsshared}
\end{figure}

\begin{figure*}[tbh!]
    \centering
    \includegraphics[scale=0.085]{ 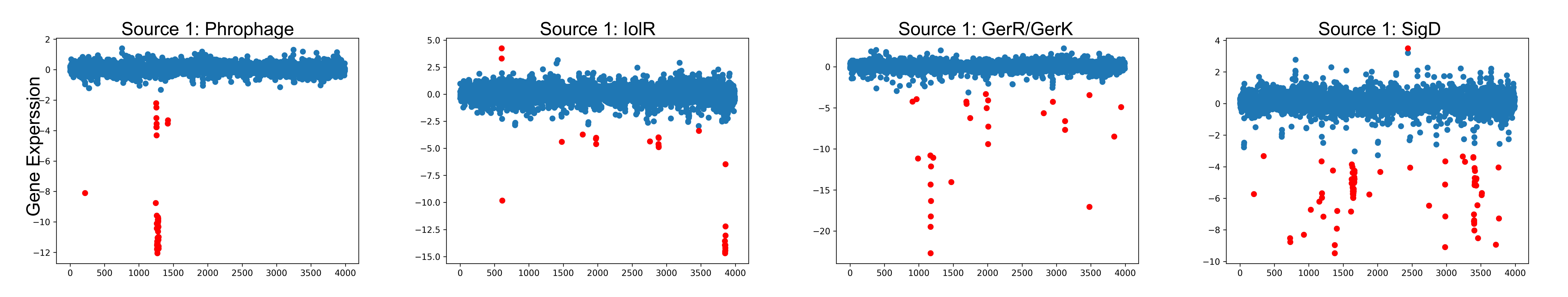}
    \vspace{1pt}
    \caption{Gene Expressions from the shared latent sources. The red markers are outliers. To a great extent, they can be related to functional groups (source 1) and gene regulatory networks (sources (2-4)), which are given in the titles of the subfigures.}
    \label{fig:shared}
    \vspace{-1.em}
\end{figure*}
\textbf{Model Selection.}
In this real-life application we do not have any prior knowledge about the shared information between the two datasets (two views). Therefore, we utilize the model selection procedure in Section \ref{subsec:modsel} to choose the number of shared sources. In this case, we randomly split the data into train and test set with proportions 3:1. We estimated the mixing matrices on the train data for different $k$. We reconstruct the test set shared sources and compute the corresponding NRE scores. This procedure is repeated 50 times for different splits and the results are displayed in Figure \ref{fig:omicsshared}. The NRE score reaches its minimum for $k=4$ which indicates the number of shared sources. Furthermore, we provided a biological interpretation of the estimated shared sources, by matching them to gene pathways, visualized in Figure \ref{fig:shared}. The x-axis represent the genes, decoded by numbers $0-3994$, and the y-axis the corresponding latent "gene expressions" in the latent source. Each marker represents one gene, and the red markers annotate the outliers.  We compared the outliers with the available ground truth regulatory network and interestingly, we could conclude that almost all red markers from the first source belong to prophage genes, and the ones from the other three sources are regulated by the \textit{iolR} and \textit{gerE/gerK} and \textit{sigD} regulators, respectively.

\begin{figure}
         \centering 
         \includegraphics[scale=0.35]{ 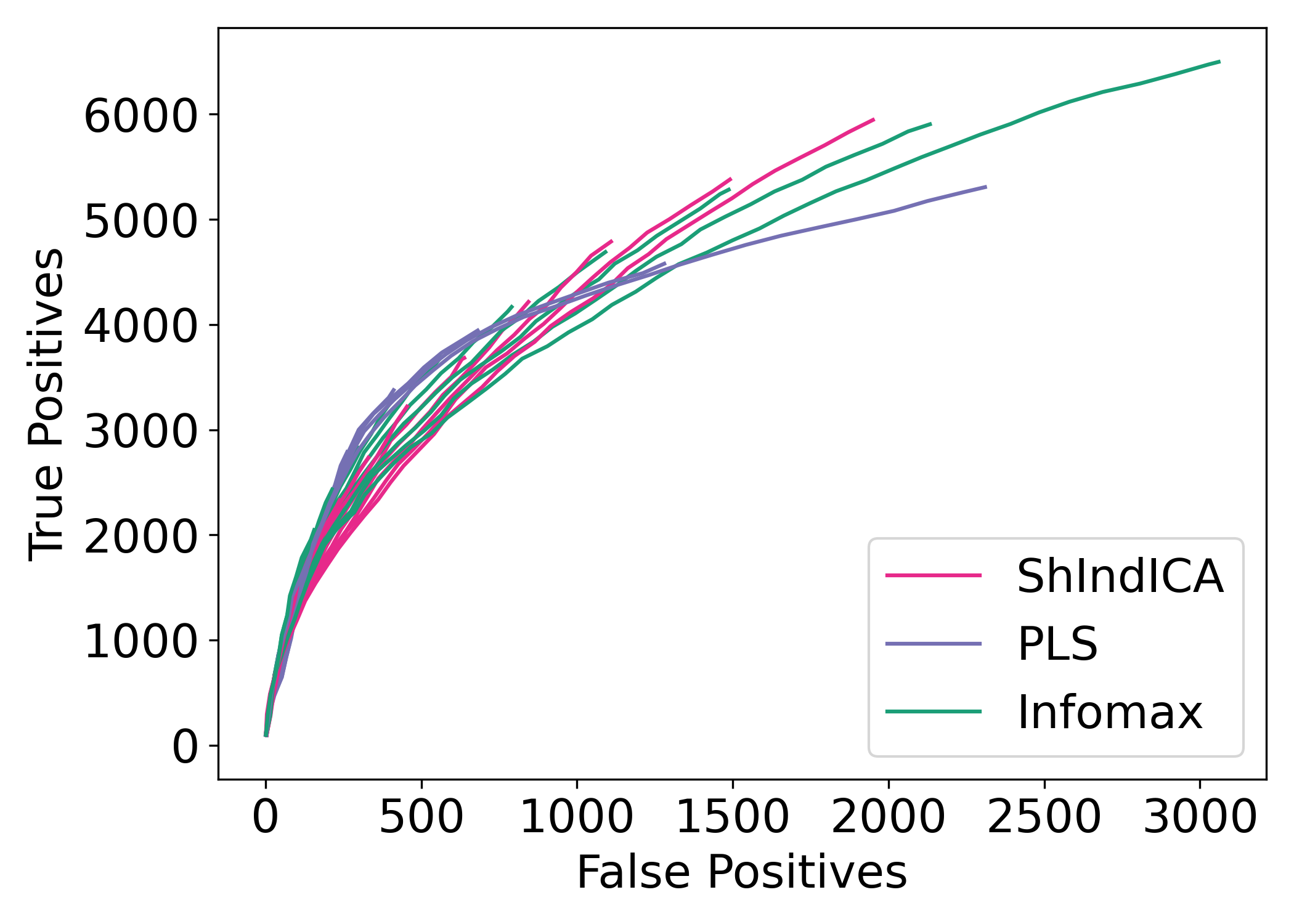}
    \caption{We compare the top ten models with ShIndICA, PLS and Infomax. We order the edges from the  networks according to their strength. We count the true positives (y-axis) and possibly false positive edges (x-axis) in the first $100, 200, \ldots$ edges.  ShIndICA and Infomax outperform  PLS for higher total number of edges. }    
             \label{fig:tpfp}
\end{figure}

\textbf{Data Integration for Co-regulation Inference.} 
The combined datasets can be used for co-regulation prediction. More precisely, in this application, we want to estimate an undirected graph with nodes referring to the genes and  with edges connecting genes with a common regulator. Since the transcriptome datasets are in the high-dimension-low-sample-size regime (number of genes$>$number of samples),  usually graphical lasso  \citep{friedman2008sparse} is well-suited for inferring graphical structure from the observed data. In this case, instead of using the "raw" data samples as input data to the graphical lasso, we use the samples extracted from the data integration algorithms. Ideally, the combined data will boost the graphical lasso performance.

\textbf{Experiment.} We evaluate ShIndICA,  PLS and naive ICA approach (Infomax as in the previous example) on the defined data integration downstream  task.  We select the number of shared sources for ShIndICA to be $5$, for PLS - $10$ (selected by cross-validation procedure provided by \citep{bouhaddani2018integrating}) and $0$ for the naive Infomax approach. The data is whitened with PCA and the number of sources per view is reduced to $180$. After applying each method, we fit $30$ graphical lasso models for different penalization parameters on the estimated components. We select the top 10 models by employing a statistical goodness-of-fit measure, called EBIC (see Appendix C for more details). In Figure \ref{fig:tpfp} we compare the 10 output graphs from the graphical lasso for each pre-processing method in  the following way.  For each estimated graph, we order the edges according to their strength.  Then we count the true positive (y-axis) and false positive (x-axis) edges in the first $100, 200, \ldots$ edges. From Figure \ref{fig:tpfp}. We can conclude that PLS shows better performance at the beginning and gets outperformed by the other two methods (especially ShIndICA) for a number of edges $>5000.$  ShIndICA performs slightly better than Infomax. The reason could be that both models output similar sources due to the small $k$ specified in the ShIndICA case.  We also run the graphical lasso on the pooled data without any pre-processing. Surprisingly,  the EBIC evaluates the empty graph as the best model describing the data. 

\section{Discussion}
We proposed a novel noisy linear ICA  approach that utilizes the prior knowledge that the different views share information to infer both shared and view-specific sources, called ShIndICA.  We provided theoretical guarantees for the identifiability of the model's linear structure, latent source and noise distributions and the number of shared and individual sources. We estimate the unmixing matrices by maximizing the joint log-likelihood of the observed views. Furthermore, we proposed a goodness of fit measure for choosing the number of shared sources.  Our empirical results showed that our model performs well on simulated data also when the model is misspecified. We also suggested a novel strategy for combining transcriptome data and empirically showed that the estimated sources can be matched to biologically meaningful signals. Moreover, our model improves the performance of a  graphical inference model chosen for the particular task. In future work, we would like to address some possible extensions, such as allowing for dependency between the sources of different views. This resembles more real-life applications like the one considered above.


\newpage

\onecolumn 
\appendix
\section{Identifiability Results}
\label{sec:proof}

Here we cite and correct needed results from {\citep[Lemma 10.2.3, Theorem 10.3.1]{kagan1973characterization}}:

\begin{theorem}[Identifiability for independent non-constant sources {\citep[Lemma 10.2.3, Theorem 10.3.1]{kagan1973characterization}}]
\label{theo:kagan}
Let $x \in \mathbb{R}^p$ be a $p$-dimensional random vector with two representations:
\begin{align}
    A^{(1)} y^{(1)} + \mu^{(1)} = x = A^{(2)} y^{(2)} + \mu^{(2)},
\end{align}
with the following properties for $i=1,2$:
\begin{enumerate}
    \item $A^{(i)} \in \mathbb{R}^{p\times {k^{(i)}}}$ is a (non-random) matrix with non-zero columns and for which no two columns are proportional to each other,
    \item $\mu^{(i)} \in \mathbb{R}^{p}$ a (non-random) column vector,
    \item $y^{(i)} \in \mathbb{R}^{k^{(i)}}$ is a random vector such that:
        \begin{enumerate}
            \item its ${k^{(i)}}$ components $\{y^{(i)}_1,\dots,y^{(i)}_{k^{(i)}} \}$ are mutually independent,
            \item each of its components $y^{(i)}_j$ is a non-constant random variable (a.s.), i.e.\ does not have a delta-peak distribution,   $j=1,\dots,{k^{(i)}}$.
        \end{enumerate}
\end{enumerate}
Then we have the following:
\begin{align}
    \mu^{(2)} - \mu^{(1)} &\in A^{(1)} \mathbb{R}^{k^{(1)}} =  A^{(2)} \mathbb{R}^{k^{(2)}}, & \mathrm{rank}(A^{(1)}) = \mathrm{rank}(A^{(2)}).
\end{align}
In particular, there exist $c^{(1)} \in \mathbb{R}^{k^{(1)}}$, $c^{(2)} \in \mathbb{R}^{k^{(2)}}$ such that: $\mu^{(2)} - \mu^{(1)} = A^{(1)} c^{(1)} = A^{(2)} c^{(2)}$.

Furthermore, the following statements hold:
\begin{enumerate}
    \item If the $l$-th column of $A^{(2)}$ is not proportional to any column of $A^{(1)}$, then $y^{(2)}_l$ is a normally distributed random variable.
    \item Assume that the $l$-th column of $A^{(2)}$ is proportional to the $j$-th column of $A^{(1)}$ with proportionality constant\footnote{Note that this proportionality constant was forgotten to be reintroduced in {\citep[Theorem 10.3.1]{kagan1973characterization}} after it was ``w.l.o.g.'' removed in {\citep[Lemmata 10.2.4, 10.2.5.]{kagan1973characterization}}.} $0 \neq \lambda \in \mathbb{R}$, i.e.: $a^{(2)}_l = \lambda \cdot a^{(1)}_j$.
        Then there exists a (complex) polynomial $g$ such that we have the following equation for the characteristic functions of the components $y^{(2)}_l$ and $y^{(1)}_j$ (in a neighbourhood of the origin):
    \begin{align}
        \phi_{y^{(2)}_l}(\lambda t) &= \phi_{y^{(1)}_j}(t) \cdot \exp(g(t)).
    \end{align}
    In particular $y^{(2)}_l$ is (non-)normal if and only if $y^{(1)}_j$ is (non-)normal.
\end{enumerate} 
\end{theorem}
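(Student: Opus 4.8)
The plan is to pass to \emph{second characteristic functions} and convert the two representations into a single functional equation, which I then dissect with a finite-difference calculus following Kagan, Linnik and Rao. Write $a^{(i)}_j$ for the $j$-th column of $A^{(i)}$ and $\psi^{(i)}_j := \log \phi_{y^{(i)}_j}$ for the second characteristic of the $j$-th source; this is well defined and continuous on a neighbourhood of the origin, since characteristic functions are continuous and nonzero there. Because the components of each $y^{(i)}$ are mutually independent, the identity $A^{(1)}y^{(1)} + \mu^{(1)} = A^{(2)}y^{(2)} + \mu^{(2)}$ yields, for $t$ near $0$,
\[
\sum_{j=1}^{k^{(1)}} \psi^{(1)}_j(\langle a^{(1)}_j, t\rangle) - \sum_{l=1}^{k^{(2)}} \psi^{(2)}_l(\langle a^{(2)}_l, t\rangle) = i\langle t, \mu^{(2)} - \mu^{(1)}\rangle =: L(t),
\]
with $L$ linear. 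Before touching this equation I would dispose of the rank and mean claims by an elementary support argument: since each $y^{(i)}_j$ is non-constant, the affine hull of $\mathrm{supp}(y^{(i)})$ is all of $\mathbb{R}^{k^{(i)}}$, so the affine hull of $\mathrm{supp}(x)$ equals $\mu^{(i)} + A^{(i)}\mathbb{R}^{k^{(i)}}$ for both $i=1,2$. Equating these two affine subspaces gives $A^{(1)}\mathbb{R}^{k^{(1)}} = A^{(2)}\mathbb{R}^{k^{(2)}}$, hence $\mathrm{rank}(A^{(1)}) = \mathrm{rank}(A^{(2)})$, and places $\mu^{(2)} - \mu^{(1)}$ in this common column space.

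The heart of the argument is a difference-operator annihilation. Let $\Delta_h F(t) := F(t+h) - F(t)$. For any column $a$, the operator $\Delta_h$ annihilates the term $\psi(\langle a,\cdot\rangle)$ exactly when $\langle a, h\rangle = 0$, and leaves a genuine finite difference when $\langle a, h\rangle \neq 0$. To prove statement~1, fix $l_0$ with $a^{(2)}_{l_0}$ not proportional to any $a^{(1)}_j$; by assumption~1 it is also not proportional to any other $a^{(2)}_l$. Hence for every other form $a$ in the combined family, $\ker\langle a,\cdot\rangle \not\subseteq \ker\langle a^{(2)}_{l_0},\cdot\rangle$, so I can pick a small $h_a$ with $\langle a, h_a\rangle = 0$ but $\langle a^{(2)}_{l_0}, h_a\rangle \neq 0$. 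Composing the $m := k^{(1)} + k^{(2)} - 1$ commuting operators $\Delta_{h_a}$ kills every term except the $\psi^{(2)}_{l_0}$ one, while it sends the linear $L$ to a polynomial of degree $\le 1$ (indeed to $0$ once $m \ge 2$). Since $\langle a^{(2)}_{l_0}, h_a\rangle \neq 0$ at every step, the surviving left-hand side is an iterated finite difference of the one-variable function $\psi^{(2)}_{l_0}$ with all nonzero steps.

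To conclude I use the classical fact that a continuous function whose $m$-th iterated finite difference is a polynomial of degree $\le r$ is itself a polynomial of degree $\le m + r$ (via $\Delta^{m+r+1} \equiv 0$ and the Fréchet characterization of polynomials). Thus $\psi^{(2)}_{l_0}$ agrees near $0$ with a polynomial, and Marcinkiewicz's theorem — a characteristic function of the form $\exp(\text{polynomial})$ forces that polynomial to have degree $\le 2$ and the law to be Gaussian — shows $y^{(2)}_{l_0}$ is normal, which is statement~1. For statement~2 I repeat the annihilation but now retain the \emph{two} proportional forms $a^{(1)}_j$ and $a^{(2)}_l = \lambda a^{(1)}_j$ (no third form is proportional to them, again by assumption~1). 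Both surviving terms are functions of the single form $s = \langle a^{(1)}_j, t\rangle$, so the same finite-difference characterization applied to $F(s) := \psi^{(1)}_j(s) - \psi^{(2)}_l(\lambda s)$ shows $F$ equals a polynomial $g$ near $0$, i.e. $\phi_{y^{(2)}_l}(\lambda s) = \phi_{y^{(1)}_j}(s)\exp(-g(s))$, which is the asserted relation after renaming $-g$ as $g$. The (non-)normality equivalence is then immediate, since $\psi^{(2)}_l$ is a polynomial iff $\psi^{(1)}_j$ is, and Marcinkiewicz applies to both simultaneously. This is exactly where the correction flagged in the footnote enters: the constant $\lambda$ rescales the argument of $\phi_{y^{(2)}_l}$ and must be carried through rather than normalised away.

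The main obstacle is the passage from ``polynomial in a neighbourhood of $0$'' to a statement strong enough to invoke Marcinkiewicz: the finite-difference calculus only controls each $\psi^{(i)}_j$ where it is defined, so I must keep all difference steps small enough that every shifted argument remains in the common domain, and then argue that \emph{local} polynomiality of a second characteristic already forces a Gaussian law. The latter is done by observing that local analyticity of $\log\phi$ gives finite moments of all orders and vanishing cumulants beyond degree two, after which Marcinkiewicz (equivalently Cramér's decomposition theorem) closes the argument. The remaining bookkeeping — tracking which operators annihilate which terms, and checking that the non-proportionality hypotheses are precisely what make the isolation possible — is routine but must be executed carefully; everything else is standard characteristic-function theory.
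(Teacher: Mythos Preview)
The paper does not actually prove this theorem: it is stated as a citation of Lemma~10.2.3 and Theorem~10.3.1 from \citet{kagan1973characterization}, with only a footnote correcting the dropped proportionality constant, and is then used as a black box in the proofs of Theorems~\ref{theo:singleview} and~\ref{cor:identmultiview}. So there is no ``paper's own proof'' to compare against; what you have written is a reconstruction of the classical Kagan--Linnik--Rao argument itself.

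Your reconstruction is correct and faithful to the original: passing to second characteristic functions, isolating a single linear form by composing finite-difference operators $\Delta_{h_a}$ along directions orthogonal to the other forms, then invoking the Fr\'echet characterization of polynomials and Marcinkiewicz's theorem is precisely the machinery of \citep[\S10.2--10.3]{kagan1973characterization}. Your affine-hull-of-support argument for the column-space equality and rank claim is a clean alternative to the characteristic-function route KLR use for Lemma~10.2.3, and it works because independence plus non-constancy of each coordinate forces $\operatorname{aff}(\mathrm{supp}\,y^{(i)}) = \mathbb{R}^{k^{(i)}}$. You also correctly flag the only genuinely delicate point --- upgrading \emph{local} polynomiality of $\log\phi$ near $0$ to a global Gaussian conclusion --- and your sketch via analyticity and vanishing higher cumulants is the standard way through. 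The bookkeeping about keeping all shifts inside the common domain of the $\psi$'s is the right thing to worry about, and the footnoted correction about carrying $\lambda$ through is exactly where your statement~2 derivation lands.
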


The following result is a corollary from the work of \citep{kagan1973characterization} and is used for proving the main result of our paper.
\begin{theorem}[Identifiability of the single view ICA model \ref{eq:model}]
\label{theo:singleview}
Let $x  \in \mathbb{R}^p$ be a random variable. 
Assume that we have the following two representations of $x$:
\begin{align}
    A^{(1)}(y^{(1)}+\epsilon^{(1)}) + b^{(1)} = x = A^{(2)}(y^{(2)}+\epsilon^{(2)}) + b^{(2)},
    \label{eq:repr}
\end{align}
with the following properties for $i=1,2$:
\begin{enumerate}
    \item $A^{(i)} \in \mathbb{R}^{p\times {k^{(i)}}}$ is a (non-random) matrix with full column rank, i.e.\ $\mathrm{rank}(A^{(i)}) ={k^{(i)}} \le p$,
    \item $b^{(i)} \in \mathbb{R}^{p}$ a (non-random) column vector,
    \item $\epsilon^{(i)}\in \mathbb{R}^{k^{(i)}}$ is an uncorrelated $k$-variate normal random variable: $\epsilon^{(i)}\sim \mathcal{N}(\mu^{(i)},\Sigma^{(i)})$, with mean $\mu^{(i)} \in \mathbb{R}^{k^{(i)}}$ and a positive-definite diagonal covariance matrix $\Sigma^{(i)}\in \mathbb{R}^{{k^{(i)}}\times {k^{(i)}}}$,
    \item $y^{(i)} \in \mathbb{R}^{k^{(i)}}$ is a random variable such that:
        \begin{enumerate}
            \item its ${k^{(i)}}$-components $\{y^{(i)}_1,\dots,y^{(i)}_{k^{(i)}} \}$ are mutually independent,
            \item each of its component $y^{(i)}_j$ is a non-constant random variable (a.s.), $j=1,\dots,{k^{(i)}}$,
            \item $y^{(i)}$ has no normal components, i.e.\ if we can write: $y^{(i)} \sim \tilde{y}^{(i)} + \hat{y}^{(i)}$ with $\tilde{y}^{(i)} \indep \hat{y}^{(i)}$, then $\tilde{y}^{(i)}$ and $\hat{y}^{(i)}$ are non-normal,
        \end{enumerate}
    \item $\epsilon^{(i)}$ is independent from $y^{(i)}$: $\epsilon^{(i)} \indep y^{(i)}$.
\end{enumerate}
Then $k^{(1)}=k^{(2)}=:k$ and there exist a permutation matrix $P\in \mathbb{R}^{k\times k}$, an invertible diagonal  matrix $\Lambda \in \mathbb{R}^{k\times k}$ and a column vector $c \in \mathbb{R}^{k}$ such that:
\begin{align*}
    A^{(2)} = A^{(1)}P\Lambda,
\end{align*}
and such that the corresponding random variables
 have the same distributions:
\begin{align*}
   P\Lambda y^{(2)} + c & \sim y^{(1)}, & P\Lambda(\epsilon^{(2)}-\mu^{(2)}) & \sim \epsilon^{(1)}-\mu^{(1)}, & P\Lambda\Sigma^{(2)}\Lambda^\top P^\top &=\Sigma^{(1)}.
\end{align*}
\end{theorem}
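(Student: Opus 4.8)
The plan is to collapse signal and noise into a single independent-component mixture, apply the characterization result of Kagan, Linnik and Rao (Theorem~\ref{theo:kagan}), and then peel the Gaussian part back off using uniqueness of the normal factor. Set $z^{(i)} := y^{(i)} + \epsilon^{(i)} \in \mathbb{R}^{k^{(i)}}$, so that $x = A^{(i)} z^{(i)} + b^{(i)}$ for $i=1,2$. Because $\Sigma^{(i)}$ is diagonal, the components of $\epsilon^{(i)}$ are mutually independent normals; combined with the mutual independence of the components of $y^{(i)}$ and with $\epsilon^{(i)}\indep y^{(i)}$, each $z^{(i)}_l = y^{(i)}_l + \epsilon^{(i)}_l$ is a function of a disjoint block of independent variables, so the components of $z^{(i)}$ are mutually independent, and each is non-constant (it carries a non-degenerate Gaussian summand). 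Since $A^{(i)}$ has full column rank its columns are non-zero and pairwise non-proportional, so $x = A^{(i)} z^{(i)} + b^{(i)}$ meets all hypotheses of Theorem~\ref{theo:kagan}.

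First I would fix the dimensions and the column correspondence. Theorem~\ref{theo:kagan} gives $\operatorname{rank}(A^{(1)}) = \operatorname{rank}(A^{(2)})$, whence $k^{(1)} = k^{(2)} =: k$. For the columns I would use part~1 of Theorem~\ref{theo:kagan} together with Cram\'er's decomposition theorem. Each $y^{(i)}_l$ is non-normal (assumption~4c, pushed to a single component, see below), so $z^{(i)}_l$ is non-normal too, since a normal law cannot be written as a sum of two independent variables one of which is non-normal. Part~1 says that a column of $A^{(2)}$ proportional to no column of $A^{(1)}$ would force the corresponding $z^{(2)}_l$ to be normal; as every $z^{(2)}_l$ is non-normal, each column of $A^{(2)}$ is proportional to some column of $A^{(1)}$, and symmetrically. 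Pairwise non-proportionality inside each matrix upgrades this to a bijection of columns with non-zero proportionality constants, so there exist a permutation matrix $P$ and an invertible diagonal matrix $\Lambda$ with $A^{(2)} = A^{(1)} P \Lambda$.

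Next I would recover the distributions. As both representations describe the \emph{same} random vector $x$ and $A^{(2)} = A^{(1)} P \Lambda$, injectivity of $A^{(1)}$ turns $A^{(1)} z^{(1)} + b^{(1)} = A^{(1)} P\Lambda z^{(2)} + b^{(2)}$ into the almost-sure identity $z^{(1)} = P\Lambda z^{(2)} + v$, where $v = A^{(1)+}(b^{(2)} - b^{(1)})$ for a left inverse $A^{(1)+}$. After relabelling the second representation by $P$ we may assume $P = I$, so this reads coordinatewise as $y^{(1)}_l + \epsilon^{(1)}_l \sim \lambda_l y^{(2)}_l + (\lambda_l \epsilon^{(2)}_l + v_l)$: on each side an independent Gaussian is added to a variable with no normal component. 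The scalar uniqueness of the normal factor---a standard consequence of Cram\'er's theorem, namely that a law splits into a no-normal-component part and a Gaussian part uniquely, up to a translation shared between the two parts and with the Gaussian variance determined---then yields $\Sigma^{(1)}_{ll} = \lambda_l^2\,\Sigma^{(2)}_{ll}$, a shift $c_l$ with $y^{(1)}_l \sim \lambda_l y^{(2)}_l + c_l$, and $\epsilon^{(1)}_l - \mu^{(1)}_l \sim \lambda_l(\epsilon^{(2)}_l - \mu^{(2)}_l)$. Reassembling over $l$ and restoring $P$ gives $P\Lambda\Sigma^{(2)}\Lambda^\top P^\top = \Sigma^{(1)}$, $P\Lambda y^{(2)} + c \sim y^{(1)}$, and $P\Lambda(\epsilon^{(2)} - \mu^{(2)}) \sim \epsilon^{(1)} - \mu^{(1)}$, as claimed.

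The heart of the argument, and the step I expect to be hardest to get exactly right, is this scalar separation: it is precisely the \emph{no normal component} hypothesis that prevents shifting Gaussian variance between $y^{(i)}_l$ and $\epsilon^{(i)}_l$ and so makes both the noise variance and the source law identifiable; dropping~4c would collapse the uniqueness. The one bookkeeping point to verify with care is that the vector-level assumption~4c really does descend to each coordinate: if some $y^{(i)}_l$ had a non-degenerate normal summand, then by the mutual independence of the components it would lift to a (degenerate) normal summand of the whole vector $y^{(i)}$, contradicting~4c. The remaining items---the rank and column bookkeeping, and the passage from the almost-sure relation to the coordinatewise laws---are routine.
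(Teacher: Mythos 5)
Your proposal is correct and follows essentially the same route as the paper's proof: collapse signal and noise into $z^{(i)}=y^{(i)}+\epsilon^{(i)}$, apply Theorem~\ref{theo:kagan} together with Cram\'er's theorem to obtain the column correspondence $A^{(2)}=A^{(1)}P\Lambda$ and $k^{(1)}=k^{(2)}$, then use assumption~4c to pin down the Gaussian variances, noise law, and source law. The only cosmetic differences are that the paper carries out your ``uniqueness of the normal factor'' step explicitly with characteristic functions (setting all but one coordinate of $t$ to zero and deriving a contradiction with 4c when the two variances differ), obtains the dimension equality by a proportional-columns contradiction rather than directly from the rank statement of Theorem~\ref{theo:kagan}, and absorbs the shift $b^{(2)}-b^{(1)}$ via the lemma quoted inside Theorem~\ref{theo:kagan} rather than via a left inverse of $A^{(1)}$.
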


\begin{proof}

1. In the first part of our proof we show that $k^{(1)}=k^{(2)}=:k$ and $ A^{(2)} = A^{(1)}P\Lambda$ for some permutation matrix $P\in \mathbb{R}^{k\times k}$, an invertible diagonal  matrix $\Lambda \in \mathbb{R}^{k\times k}$.

First, for $i=1,2$ we state an equivalent formulation of the linear representation of $x$ given in \ref{eq:repr}. According to \citep[Lemma 10.2.3]{kagan1973characterization}, there exist a constant column vector $c^{(2)}\in\mathbb{R}^{k^{(2)}}$ such that $b^{(2)} - b^{(1)} =A^{(2)} c^{(2)}$. It follows that     $\tilde{x}=x- b^{(1)}=A^{(1)}(y^{(1)}+\epsilon^{(1)}) = A^{(2)}(y^{(2)}+\epsilon^{(2)} + c^{(2)})$.

Furthermore, note that if $y^{(i)}$ is non-normal, then the random variables $g^{(1)} = y^{(1)}+\epsilon^{(1)}$ and $g^{(2)} = y^{(2)}+\epsilon^{(2)}+ c^{(2)}$  are also non-normal. This follows from the fact that if $g^{(i)}$ is normal then both $y^{(i)}$ and $\epsilon^{(i)}$ would be normal according to the Lévy-Cramér theorem. 

Thus, we can apply Theorem \ref{theo:kagan} for the two representations of $\tilde{x}$,  $\tilde{x}=A^{(1)}g^{(1)}$ and $\tilde{x}= A^{(2)}g^{(2)}$. Since every component of $g^{(i)}$ is non-normal, it follows that every column of $A^{(1)}$ is proportional to a column of $A^{(2)}$ and vice versa.

Now assume  w.l.o.g that $k^{(1)}>k^{(2)}$. Then, there exist two columns of $A^{(1)}$ that are proportional to a column of $A^{(2)}$. However, this is a contradiction to assumption 1. that the matrix $A^{(1)}$ has full column rank.

Thus, it follows that $k^{(1)}=k^{(2)}=:k$ and $ A^{(2)} = A^{(1)}P\Lambda$ for some permutation matrix $P\in \mathbb{R}^{k\times k}$, an invertible diagonal  matrix $\Lambda \in \mathbb{R}^{k\times k}$. Moreover,
\begin{align*}
    A^{(1)}(y^{(1)}+\epsilon^{(1)}) = A^{(1)} P \Lambda(y^{(2)}+\epsilon^{(2)} + c^{(2)}).
\end{align*} 

Multiplying with $(A^{(1),\top}A^{(1)})^{-1}A^{(1),\top}$, which gives:

\begin{align*}
    y^{(1)}+\epsilon^{(1)} = P \Lambda(y^{(2)}+\epsilon^{(2)} + c^{(2)}).
\end{align*}

2. In the remaining we show that there exist a column vector $c$ such that  $ y^{(1)}  \sim  P \Lambda(y^{(2)} + c^{(2)})+c$ and $\epsilon^{(1)}-\mu^{(1)}  \sim P \Lambda( \epsilon^{(2)}-\mu^{(2)})$ (or equivalently $\Sigma^{(1)}= P\Lambda\Sigma^{(2)}\Lambda^\top P^\top$). Now, define $\tilde{y}^{(2)} = P \Lambda y^{(2)}$, $\tilde{c}^{(2)}= P \Lambda c^{(2)}$ and $\tilde{\epsilon}^{(2)}  =  P \Lambda {\epsilon}^{(2)}$ which is normally distributed with mean $\tilde{\mu^{(2)}}=P\Lambda \mu^{(2)}$ and a diagonal covariance matrix $\tilde{\Sigma}^{(2)} = P\Lambda\Sigma^{(2)}\Lambda^\top P^\top$.

Define the characteristic functions of $y^{(1)},\tilde{y}^{(2)},\epsilon^{(1)},\tilde{\epsilon}^{(2)}$ as $\phi_{y^{(1)}}(\cdot),\phi_{\tilde{y}^{(2)}}(\cdot),\phi_{\epsilon^{(1)}}(\cdot),\phi_{\tilde{\epsilon}^{(2)}}(\cdot):\mathbb{R}^k\rightarrow \mathbb{R}$, from assumption 5. it follows that
\begin{align*}
   \phi_{\epsilon^{(1)}}(t)\phi_{y^{(1)}}(t)&=e^{it^\top\tilde{c}^{(2)}} \phi_{\tilde{\epsilon}^{(2)}}(t)\phi_{\tilde{y}^{(2)}}(t)\\
 \phi_{\epsilon^{(1)}}(t)\prod_{i=1}^k\phi_{y_i^{(1)}}(t_i)&=e^{it^\top\tilde{c}^{(2)}}\phi_{\tilde{\epsilon}^{(2)}}\prod_{i=1}^k\phi_{\tilde{y}_i^{(2)}}(t_i)\\
\end{align*}

 The last equation follows from assumption $4$a. Now set $t_i=0$ for all $i\neq 1.$ We get for all $t_1$
 
 $$\exp(it_1\mu_1^{(1)}-\Sigma^{(1)}_{11}t_1^2)\phi_{y_1^{(1)}}(t_1) =\exp(it_1\tilde{c}_1^{(2)})  \exp(it_1\tilde{\mu}_1^{(2)}-\tilde{\Sigma}^{(2)}_{11}t_1^2)\phi_{\tilde{y}_1^{(2)}}(t_1).$$
 
 W.l.o.g. we assume $0<\Sigma^{(1)}_{11}<\tilde{\Sigma}^{(2)}_{11}.$ Thus, the characteristic function given by $\exp(-(\tilde{\Sigma}^{(2)}_{11}-\Sigma^{(1)}_{11})t_1^2)$ is a well defined characteristic function of a normally distributed random variable with mean $0$ and variance $\tilde{\Sigma}^{(2)}_{11}-\Sigma^{(1)}_{11}$.  Then, the characteristic function of $y_1^{(1)}$ is proportional to a  product of the characteristic functions of $\tilde{y}_1^{(2)}$ and a Gaussian random variable. This is a contradiction to the assumption that $y_1^{(1)}$ does not have a normal component (assumption 4c). It follows that, $\Sigma^{(1)}_{11}=\tilde{\Sigma}^{(2)}_{11}$ and for all $t_1\in\mathbb{R}$ $\phi_{y_1^{(1)}}(t_1) =\exp{it_1(\tilde{c}_1^{(2)}+\tilde{\mu}_1^{(2)}-\mu_1^{(1)})} \phi_{\tilde{y}_1^{(2)}}(t_1),$ i.e. $ \tilde{y}_1^{(2)} + c_1  \sim y_1^{(1)}$ where $c_1 = \tilde{c}_1^{(2)}+\tilde{\mu}_1^{(2)}-\mu_1^{(1)}$. The remaining statements can be proven analogously.
 
\end{proof}

\subsection{Proof of Theorem \ref{cor:identmultiview}}

 \begin{proof}
 First, we can directly apply Theorem \ref{theo:singleview} to each single view $d, d\in\{1,\ldots, D\}$ which ensures  the identifiability of the mixing matrices up to permutation and scaling, i.e. there exist a permutation matrix $P_d$ and an invertible diagonal matrix $\Lambda_d$ such that  $A^{(2)}_d = A^{(1)}_dP_d\Lambda_d$ and $\mathrm{rank}(A_d^{(2)})= \mathrm{rank}(A_d^{(1)}) = k_d$. 
 
 W.l.o.g., let $c^{(1)}>c^{(2)}$. That means that the shared sources in representation $(1)$ are more that the ones in representation $(2)$. It follows  according to Theorem \ref{theo:kagan}, that there exist a component of the shared sources from $(1)$   and an individual component from $(2)$ in every view such that they are both proportional. More precisely, for any $d \in\{1,\ldots, D\}$ there exist $k, l\in\{1, \ldots, k_d\}$ such that $s_{0k}^{(1)}$ is a component of the shared sources $s_{0}^{(1)}$ and $s_{dl}^{(2)}$ is a component from the individual sources $s_{d}^{(2)}$ such that $s_{0k}^{(1)} + \epsilon_{d0k}^{(1)} = (\Lambda_{d})_{ll} (s_{dl}^{(2)}+\epsilon_{d1l}^{(2)}).$ Let $r\neq d$ be another view such that there exist $m\in\{1, \ldots, k_r\}$ with $s_{mr}^{(2)}$ being an individual component and $s_{0k}^{(1)} + \epsilon_{r0k}^{(1)} = (\Lambda_{d})_{mm} (s_{rm}^{(2)}+\epsilon_{r1m}^{(2)}).$  This is contradiction to the assumption that $s_{rm}^{(2)} \indep s_{dl}^{(2)}$. It follows that $c^{(1)}=c^{(2)}$.
 
 Furthermore, $\mathrm{Var}(x_d)=\sigma_d^{(1)2}A_d^{(1)}A_d^{(1),\top}=\sigma_d^{(2)2}A_d^{(2)}A_d^{(2),\top}=\sigma_d^{(2)2}A_d^{(1)}P_d\Lambda_d^2P_d^\top A_d^{(1),\top}$. Multiplying with $A_d^{(1),\dagger} =(A_d^{(1)\top}A_d^{(1)} )^{-1}A_d^{(1)\top}$  from left and $A_d^{(1),\dagger,^\top}=A_d^{(1)}(A_d^{(1)\top}A_d^{(1)} )^{-1}$ from right yields
  $\sigma_d^{(1)2}\mathbb{I}_{k_d} = \sigma_d^{(2)2} P_d\Lambda_d^2P_d^\top$. It follows that $\frac{\sigma_d^{(2)2}}{\sigma_d^{(1)2}}\Lambda_d^2 = \mathbb{I}_{k_d}.$ Computing the covariance between two different views $d, l \in \{1,\ldots, D\}$ gives 
  \begin{align*}
      \mathrm{Cov}(x_d,x_l) = A_{d0}^{(1)}A_{l0}^{(1),\top} = A_{d0}^{(2)}A_{l0}^{(2),\top}=A_{d0}^{(1)} \Lambda_d[c,c]\Lambda_l[c,c]A_{l0}^{(1),\top} 
  \end{align*}
  where $\Lambda_d[c,c]$ is an invertible diagonal matrix composed by the first $c$ columns and rows of the matrix $\Lambda_d.$ By multiplying with the left-inverse of $ A_{d0}^{(1)}$ from the left and right-inverse of $A_d^{(1),\top}$ from the right, we get for any $d$ and $l$ $\Lambda_d[c,c]\Lambda_l[c,c] = \mathbb{I}_{c}$. It follows that all entries of $\Lambda_d$ equal $1$ or $-1$ and therefore $\frac{\sigma_d^{(2)2}}{\sigma_d^{(1)2}} = 1$ for every $d$. 
  
  In the remaining, we will show that the distribution of the sources is identifiable even in the cases when they have normal components. Let $s_i^{(1)}$  be component from $\tilde{s}_i^{(1)}$. Furthermore, there exist $j\in\{1, \ldots, k_d\}$ such that $s_i^{(1)}+\epsilon_i^{(1)} = s_j^{(2)}+\epsilon_j^{(2)}$. Taking the characteristic functions from both sides yields
\begin{align*}
    \phi_{s_i^{(1)}}(t)\phi_{\epsilon_i^{(1)}}(t) = \phi_{s_j^{(2)}}(t)\phi_{\epsilon_j^{(2)}}(t)
\end{align*}
   Since $\sigma_d^{(1)2} = \sigma_d^{(2)2}$ and the noise and sources are with 0 mean, the above equation simplifies to $ \phi_{s_i^{(1)}}(t) = \phi_{s_j^{(2)}}(t)$, i.e. $\phi_{s_i^{(1)}}(t)\sim\phi_{s_j^{(2)}}(t)$.
\end{proof}

\subsection{Additional Results}
  \begin{theorem}
  \label{cor:gauss}
 Let $x_1, \ldots, x_D$ for $D\geq 3$ be random vectors which are generated according to the model defined in \ref{eq:model}. Furthermore, we assume that we have the following two representations of $x_1, \ldots, x_D$ according to \ref{eq:model}:
 \begin{align*}
  A_{d0}^{(1)}s_0^{(1)}+A_{d1}^{(1)}s_d^{(1)}+A_d^{(1)}\epsilon_d^{(1)}  = x_d =A_{d0}^{(2)}s_0^{(2)}+A_{d1}^{(2)}s_d^{(2)}+A_d^{(2)}\epsilon_d^{(2)},\qquad d\in\{1,\ldots, D\},
 \end{align*}
 Additionally, to the assumptions of \ref{eq:model} it holds that 
  \begin{enumerate}
     \item each of the components $s_{dj}^{(i)}$ of $s_{d}^{(i)}$ for $j=1, \ldots, k_d^{(i)}-c^{(i)}$ is non-Gaussian.
     \item  $s_{0}^{(i)}$ can have  Gaussian components. Furthermore, if the number of Gaussian components exceeds 2, for all $k,l\in\{1,\ldots, c\}$ with $k\neq l$ it holds that $\gamma^{(i)}_{k}\neq \gamma^{(i)}_{l}$, where $\gamma^{(i)}_{k}$ and $\gamma^{(i)}_{l}$ are the variances of the components $s_{0k}^{(i)}$ and  $s_{0l}^{(i)}$
 \end{enumerate}
 Then,  for fixed number of shared sources $c$ and for all $d=1, \ldots, D$ $k_d^{(1)}=k_d^{(2)}=k_d,$  and there exist a permutation matrix $P_d\in \mathbb{R}^{k_d\times k_d}$ and an ivertible diagonal matrix $\Lambda_d \in \mathbb{R}^{k_d\times k_d}$ such that
 \begin{align*}
     A_d^{(2)} = A_d^{(1)}P_d\Lambda_d  
 \end{align*}
 \end{theorem}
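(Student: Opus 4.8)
The plan is to separate the non-Gaussian sources (all individual coordinates together with the non-Gaussian shared coordinates) from the Gaussian shared coordinates, since Theorem~\ref{theo:kagan} only controls non-Gaussian directions, and to resolve the two groups by different mechanisms.

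\textbf{Step 1 (non-Gaussian matching via Kagan).} For each fixed view $d$ I would rewrite the two representations as $x_d = A_d^{(1)} g_d^{(1)} = A_d^{(2)} g_d^{(2)}$ with effective sources $g_d^{(i)} := \tilde{s}_d^{(i)} + \epsilon_d^{(i)}$. Because the noise is Gaussian and independent of the sources, the Lévy-Cramér theorem shows that a coordinate $g_{d,j}^{(i)}$ is non-Gaussian exactly when the underlying source $\tilde s^{(i)}_{d,j}$ is, which by assumption~1 are precisely all individual coordinates and the non-Gaussian shared coordinates. Full column rank forces the columns of each $A_d^{(i)}$ to be pairwise non-proportional, so Theorem~\ref{theo:kagan} applies: $k_d^{(1)}=k_d^{(2)}=:k_d$ follows from $k_d^{(i)}=\operatorname{rank}(\operatorname{Var}(x_d))$, and part~1 of Theorem~\ref{theo:kagan} forces every non-Gaussian coordinate to have its mixing column matched (up to a nonzero scalar) to a column of the other representation. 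Hence the non-Gaussian columns of $A_d^{(1)}$ and $A_d^{(2)}$ agree up to permutation and scaling.

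\textbf{Step 2 (separating shared from individual).} Exactly as in the proof of Theorem~\ref{cor:identmultiview}, I would rule out matching a non-Gaussian shared coordinate in one representation to an individual coordinate in the other: such a match occurring in two distinct views $d\neq r$ would make two individual coordinates of representation~$(2)$ deterministic rescalings of the same shared coordinate of representation~$(1)$, contradicting their cross-view independence $\indep$. This equalizes the number of non-Gaussian shared sources and, via the cross-covariance identity $\operatorname{Cov}(x_d,x_l)=A_{d0}^{(i)}\operatorname{Var}(s_0^{(i)})A_{l0}^{(i),\top}$ (noise and individual sources drop out), fixes the scalings on the non-Gaussian shared columns and identifies the shared subspace $\operatorname{col}(A_{d0}^{(i)})$ in each view.

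\textbf{Step 3 (Gaussian shared sources, the main obstacle).} Here $D\geq 3$ and the distinct-variance hypothesis are essential. Writing $B_d^{(i)}$ for the full-column-rank Gaussian-shared columns and $\Gamma^{(i)}$ for their diagonal variance matrix, the non-Gaussian shared contributions to $\operatorname{Cov}(x_d,x_l)$ cancel between representations (Step~2), leaving a common matrix $M_{dl}:=B_d^{(1)}\Gamma^{(1)}B_l^{(1),\top}=B_d^{(2)}\Gamma^{(2)}B_l^{(2),\top}$ for each pair $d\neq l$. Using a third view $m$ and the full-rank identity $(B\Gamma C^\top)^{+}=(C^\top)^{+}\Gamma^{-1}B^{+}$, the triple product $M_{dl}M_{ml}^{+}M_{md}$ collapses to the symmetric matrix $\Sigma_d^{G}:=B_d^{(i)}\Gamma^{(i)}B_d^{(i),\top}$, reconstructed purely from observable cross-covariances and identical for $i=1,2$. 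The distinct variances $\gamma_k$ are what must now remove the residual rotational freedom: second-order statistics alone pin $B_d^{(i)}$ down only up to a $\Gamma$-congruence, and higher-order cumulants are useless since the sources are Gaussian. The delicate point, and the step I expect to be hardest, is to reduce $\Sigma_d^{G}$ to a genuine symmetric eigenproblem with distinct eigenvalues (through an orthonormalization applied identically to both representations), so that distinctness of $\gamma_k$ makes the eigenvectors—hence $B_d$—unique up to sign and permutation, and to verify that the common change of basis $R$ satisfying $R\Gamma^{(2)}R^\top=\Gamma^{(1)}$ across all $D\geq 3$ views is forced to be a signed permutation. Assembling the matched non-Gaussian columns of Steps~1--2 with the Gaussian-shared columns of Step~3 yields the claimed $A_d^{(2)}=A_d^{(1)}P_d\Lambda_d$.
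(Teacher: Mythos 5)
Your Steps 1--2 follow the paper's proof of Theorem~\ref{cor:gauss} essentially verbatim (Kagan matching of the non-Gaussian columns, cross-view independence to separate shared from individual, the cross-covariance identity), and your triple-product observation in Step~3 is correct and even elegant: for $d,l,m$ distinct, $M_{dl}M_{ml}^{+}M_{md}=B_d\Gamma B_d^\top$ indeed holds, and it recovers the Gaussian-shared single-view covariance from observable cross-covariances; combined with the cross-covariance equalities this yields a \emph{common orthogonal} $O$ with $B_d^{(2)}\Gamma^{(2)1/2}=B_d^{(1)}\Gamma^{(1)1/2}O$ for all $d$. This parallels the paper's route, which instead defines transition matrices $P_d=(A_{d0}^{\gamma,(1),\top}A_{d0}^{\gamma,(1)})^{-1}A_{d0}^{\gamma,(1),\top}A_{d0}^{\gamma,(2)}$ and follows Theorem~1 of \citep{richard2021shared} to show $P_dP_l^\top=\mathbb{I}_c$ for all pairs, which with $D\geq 3$ forces all $P_d$ to equal one orthogonal $P$. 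Both routes use $D\geq 3$ at this point, yours through the third view in the triple product.

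The genuine gap is in how you propose to kill the residual orthogonal freedom. The constraint you plan to exploit, that the common change of basis $R:=\Gamma^{(1)1/2}O\Gamma^{(2)-1/2}$ satisfies $R\Gamma^{(2)}R^\top=\Gamma^{(1)}$, is \emph{vacuous}: it holds identically for every orthogonal $O$, since $\Gamma^{(1)1/2}O\Gamma^{(2)-1/2}\,\Gamma^{(2)}\,\Gamma^{(2)-1/2}O^\top\Gamma^{(1)1/2}=\Gamma^{(1)}$. Likewise, the $\gamma_k$ are not eigenvalues of $\Sigma_d^G=B_d\Gamma B_d^\top$ (since $B_d$ is not orthogonal), so no orthonormalization turns $\Sigma_d^G$ into an eigenproblem whose eigenvectors pin down $B_d$. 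In fact, no argument based only on the second-order statistics of the shared signals can work: in a noiseless model whose shared sources are all Gaussian, $\bigl(B_d^{(1)}\Gamma^{(1)1/2}O\Gamma^{(2)-1/2},\,\Gamma^{(2)}\bigr)$ is a valid alternative representation with independent shared components for \emph{every} orthogonal $O$ and every positive diagonal $\Gamma^{(2)}$ (distinct entries included), so the mixing matrix is genuinely unidentifiable there. What your sketch never uses --- and what the paper's proof relies on to finish --- is the Gaussian \emph{noise}: equating the shared parts of the two representations gives, after whitening the sources, $\tilde{s}_0^{(1)}+\tilde{\epsilon}_d^{(1)}=P(\tilde{s}_0^{(2)}+\tilde{\epsilon}_d^{(2)})$ with $\tilde{\epsilon}_d^{(i)}\sim\mathcal{N}(0,\sigma_d^{(i)2}\Gamma^{(i)-1})$, whence $\sigma_d^{(1)2}\Gamma^{(1)-1}=P\,\sigma_d^{(2)2}\Gamma^{(2)-1}P^\top$. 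Isotropy of the noise in the original source coordinates of \emph{both} representations turns the distinct-variance hypothesis into a statement about an orthogonal conjugation between two diagonal matrices with distinct diagonal entries, and Lemma~2 of \citep{richard2021shared} then forces $P$ (equivalently your $O$) to be a signed permutation. Your proof becomes complete if you replace the vacuous constraint by this noise-covariance relation; as written, Step~3 cannot be closed.
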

 
 \begin{proof}
 Theorem \ref{theo:kagan} yields that if the individual components are not normal, then for each column of $a_j^{(1)}$ of $A_{d1}^{(1)}$ there is a column  $a_i^{(2)}$ of $A_{d1}^{(1)}$  such that there exist $\lambda\neq 0$ with $a_j^{(2)} = \lambda a_j^{(1)}$.  Since all mixing matrices have full column rank it follows that there is one-to-one correspondence between the columns of  $A_{d1}^{(1)}$  and the columns of $A_{d1}^{(2)},$ and thus $k_d^{(1)} = k_d^{(2)}$ 
 
 If at most one of the shared components is normal please refer to \cite{comon1994independent}. Now consider the case when at least two components are normal. First the number of normal components in both representation is the same since $c$ is fixed and the number of non-normal components is identifiable with the same arguments as above.
 
 Computing the covariance between two different views $d, l\in\{1,\ldots,D\}$ yields
 \begin{align*}
     \mathrm{Cov}(x_d, x_l) = A_{d0}^{(1)}\Gamma^{(1)}A_{l0}^{(1),\top}=A_{d0}^{(2)}\Gamma^{(2)}A_{l0}^{(2),\top}
 \end{align*}
where $\Gamma^{(i)}$ is the covariance matrix of $s_0^{(i)}$ for $i=1,2.$ We define $A_{d0}^{\gamma,(i)} = A_{d0}^{(i)}\Gamma^{(i)\frac{1}{2}}$ for any $d\in\{1,\ldots,D\}$. Let $P_d = (A_{d0}^{\gamma,(1),\top}A_{d0}^{\gamma,(1)})^{-1}A_{d0}^{\gamma,(1),\top}A_{d0}^{\gamma,(2)}.$ Following the proof of Theorem 1 \citep{richard2021shared} we get that $P_dP_l^\top=\mathbb{I}_c = P_dP_k^\top=P_kP_l^\top$ for any $d,k,l \in \{1, \ldots, D\}.$ Thus, $P_l=P_d=P_k=P$ and they are orthogonal. Moreover, for all $d=1,\ldots,D$ it holds $\tilde{s}_0^{(1)}+\tilde{\epsilon}_d^{(1)} = P(\tilde{s}_0^{(2)}+\tilde{\epsilon}_d^{(2)})$ where $\tilde{\epsilon}_d^{(i)} \sim \mathcal{N}(0, \sigma_d^{(i)2}\Gamma^{(i)-1})$ and $\tilde{s}_0^{(i)} = \Gamma^{(i)-\frac{1}{2}}s_0^{(i)}.$ From the last equation it follows that $\sigma_d^{(1)2}\Gamma^{(1)-1} = P(\sigma_d^{(2)2}\Gamma^{(2)-1})P^\top$. Lemma 2 \citep{richard2021shared} implies that $P$ is a sign and permutation matrix. 
 \end{proof}

\newpage

\section{Optimization}
\label{app:opt}
\begin{lemma}
\label{lem:help}
Let $W\in \mathbb{R}^{c\times k}$ such that  $WW^\top=\mathbb{I}_c$ and  $x^{1}, \ldots, x^{N} \in \mathbb{R}^k$ such that for every $j=1,\dots,k$, we have $\sum_{i=1}^N (x_j^i)^2=1$ and for every $j \neq k$, we have $\sum_{i=1}^N x_j^i x_k^i =0$. 
Then for every $j =1, \ldots, c$, it also holds that $\sum_{i=1}^N ((Wx^i)_j)^2=1.$
\begin{proof}
Let $W_j$ be the $j-$th row of $W$. Then
\begin{align*}
    \sum_{i=1}^N ((Wx^i)_j)^2&= \sum_{i=1}^N (\sum_{l=1}^kW_{jl}x_l^i)^2
     =\sum_{i=1}^N \sum_{l=1}^k \sum_{r=1}^k W_{jl}x_l^i W_{jr}x_r^i\\
     &= \sum_{l=1}^k \sum_{r=1}^k W_{jl}W_{jr} \sum_{i=1}^N x_l^i x_r^i=\sum_{l=1}^k \sum_{r=1}^kW_{jl}W_{jr}\delta_{lr} = \sum_{r=1}^kW_{jr}^2=1
\end{align*}
where $\delta_{lr}=1$ if $l=r$ and $0$ otherwise. For the fourth equation we used that $\sum_{i=1}^N (x_j^i)^2=1$ and $\sum_{i=1}^N x_j^i x_k^i =0$ for all $j\neq k$; and for the last one we used $WW^\top=\mathbb{I}_c.$
\end{proof}
\end{lemma}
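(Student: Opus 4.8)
The plan is to recognize the two hypotheses on the samples as a single matrix whitening condition and then read off the conclusion as a diagonal entry of a matrix product. First I would collect the samples into the data matrix $X \in \mathbb{R}^{k \times N}$ whose $i$-th column is $x^i$. The assumption $\sum_{i=1}^N (x_j^i)^2 = 1$ for each $j$, together with $\sum_{i=1}^N x_j^i x_r^i = 0$ for $j \neq r$, says exactly that $XX^\top = \mathbb{I}_k$; that is, the rows of $X$ are orthonormal.

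Next I would rewrite the target quantity in matrix form. Since $(Wx^i)_j = (WX)_{ji}$, the sum I want to control is
\begin{align*}
\sum_{i=1}^N \big((Wx^i)_j\big)^2 = \sum_{i=1}^N (WX)_{ji}^2 = \big( (WX)(WX)^\top \big)_{jj} = \big( W X X^\top W^\top \big)_{jj}.
\end{align*}
Substituting the whitening identity $XX^\top = \mathbb{I}_k$ collapses this to $(WW^\top)_{jj}$, and the hypothesis $WW^\top = \mathbb{I}_c$ gives $(WW^\top)_{jj} = 1$ for every $j = 1, \ldots, c$, which is the claim.

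An equivalent route, closer to a direct computation, is to expand $\big((Wx^i)_j\big)^2 = \sum_{l,r} W_{jl} W_{jr} x_l^i x_r^i$, interchange the order of summation so that the sum over $i$ acts only on $x_l^i x_r^i$, use the orthonormality of the rows of $X$ to replace $\sum_i x_l^i x_r^i$ by $\delta_{lr}$, and finally use $\sum_r W_{jr}^2 = (WW^\top)_{jj} = 1$. There is no genuine obstacle here; the only point requiring care is the bookkeeping in swapping the finite sums and tracking which index the whitening condition eliminates. I expect the whole argument to run to a few lines regardless of which of the two presentations is chosen.
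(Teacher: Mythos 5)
Your proof is correct and is essentially the paper's own argument: the paper carries out exactly the index computation you describe in your second paragraph (expand, swap the sums over $i$ and $l,r$, use $\sum_i x_l^i x_r^i = \delta_{lr}$, then $\sum_r W_{jr}^2 = (WW^\top)_{jj} = 1$), and your matrix formulation $(WXX^\top W^\top)_{jj} = (WW^\top)_{jj}$ is just a cleaner packaging of the same steps. No gaps; both routes are fine.
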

\subsection{Derivations of the Joint Data Log-Likelihood}
Under the generative model assumptions and optimization constraints stated in \ref{sec:method} it holds
\begin{align}
\label{eq:proofloss}
\mathcal{L}(W_1, \ldots, W_D) &=\sum_{i=1}^N\log f(\bar{s}_0^i) +  \sum_{i=1}^N \sum_{d=1}^D \log  p_{Z_{d,1}}(z_{d,1}^{i})+ N\sum_{d=1}^D \log\vert W_d\vert\\
    &- \frac{1}{2\sigma^2} \Big(  \sum_{d=1}^D \operatorname{trace}(  Z_{d,0}Z_d^{(1)\top}) -\frac{1}{D}  \sum_{d=1}^D \sum_{l=1}^D \operatorname{trace}(  Z_{d,0}Z_{l,0}^{\top}) \Big)
\end{align}
\begin{proof}

Let $\mathbf{x} = (x_1^\top,x_2^\top,\ldots,x_D^\top)^\top  \in \mathbb{R}^{K_D},$ $\mathbf{\tilde{s}}=(\tilde{s}_1^\top,\tilde{s}_2^\top,\ldots,\tilde{s}_D^\top)^\top  \in \mathbb{R}^{K_D},$  $\mathbf{\epsilon}=(\epsilon_1^\top,\epsilon_2^\top,\ldots,\epsilon_D^\top)^\top  \in \mathbb{R}^{K_D}$, where $K_D=\sum_{d=1}^D k_d$ and for $W_d = A_d^{-1}$ define
\begin{align*}
    \mathbf{W}=\left( \begin{array}{ccccc}
W_1 & 0 & \ldots & 0 &0  \\
 0& W_2 &  \ldots & 0 & 0 \\
 &  & \ddots & &  \\
 0& 0 &\ldots  & W_{D-1} & 0 \\
 0& 0 & \ldots & 0 & W_D \\
\end{array} \right),\
  \mathbf{A}=\left( \begin{array}{ccccc}
A_1 & 0 & \ldots & 0 &0  \\
 0& A_2 &  \ldots & 0 & 0 \\
 &  & \ddots & &  \\
 0& 0 &\ldots  & A_{D-1} & 0 \\
 0& 0 & \ldots & 0 & A_D \\
\end{array} \right).
\end{align*}

Furthermore, let $z_d:=W_dx_d=\tilde{s}_d +\epsilon_d,$ and $z_{d,0}:=s_0 +\epsilon_{d0}\in \mathbb{R}^{c}$ and $z_{d,1}:=s_d +\epsilon_{d1} \in \mathbb{R}^{k_d-c},$ i.e. $z_d = (z_{d,0}, z_{d,1})^\top.$ Let $p_{\mathbf{X}}$ be the joint distribution of $x_1,\ldots, x_D$, $p_{\mathbf{Z}}$ the joint distribution of $z_1,\ldots, z_D$,  $p_{\mathbf{Z}_0}$ the joint distribution of $z_{1,0},\ldots, z_{D,0}$,  $p_{\mathbf{Z}_1}$ the joint distribution of $z_{1,1},\ldots, z_{D,1}$ and $ p_{Z_{d,1}}$ the probability distribution of $z_{d,1}$.

Note that the model in \ref{eq:model} is equivalent to $\mathbf{x}=\mathbf{A}\mathbf{z}$. By multiplying with the inverse of $\mathbf{A}$ (i.e.\  $\mathbf{W}$) from the left we get $\mathbf{W}\mathbf{x}=\mathbf{z}$. Then for the joint likelihood of $x_1,\ldots,x_D$ we get 
\begin{align*}
p_{\mathbf{X}}(\mathbf{x})&=p_{\mathbf{Z}}(\mathbf{z})\vert \mathbf{W}\vert\\
 &=p_{\mathbf{Z}}(\mathbf{z})\prod_{d=1}^D\vert W_d\vert\\
    &= p_{\mathbf{Z}_0}(z_{1,0},\ldots, z_{D,0})p_{\mathbf{Z}_1}(z_{1,1},\ldots, z_{D,1}) \prod_{d=1}^D\vert W_d\vert\\
    & =p_{\mathbf{Z}_0}(z_{1,0},\ldots, z_{D,0})\prod_{d=1}^D  p_{Z_{d,1}}(z_{d,1}) \prod_{d=1}^D\vert W_d\vert.
\end{align*}
\begin{enumerate}
\item Second equation: $\mathbf{W}$ is a block diagonal matrix and for all $d=1,\ldots ,D$, and  $W_{d}\in\mathbb{R}^{k_d\times k_d}$.
    \item Third equation: $z_{1,0},\ldots, z_{D,0} \indep z_{1,1},\ldots, z_{D,1}.$ 
    \item Fourth equation follows from the fact that $z_{1,1},\ldots, z_{D,1}$ are mutually independent since  $\{s_{1i}\}_{i=1}^{k_1-c}, \ldots\{s_{Di}\}_{i=1}^{k_D-c},\{\epsilon_{1i}\}_{i=1}^{k_1},\ldots,\{\epsilon_{Di}\}_{i=1}^{k_D}$ are mutually independent.
\end{enumerate}
It follows that
\begin{align*}
    p_{\mathbf{Z}_0}(z_{1,0},\ldots, z_{D,0})& = \int  p_{\mathbf{Z}_0\vert S_0}(z_{1,0},\ldots, z_{D,0}\vert s_0) p_{S_0}(s_0) ds_0 \\
    & = \int \Big(\prod_{d=1}^D \mathcal{N}(z_{d,0};s_0, \sigma^2\mathbb{I}_c)\Big) p_{S_0}(s_0) ds_0\\
  &\propto\int \exp\Big(-\sum_{d=1}^D\frac{\Vert z_{d,0}-s_0 \Vert^2}{2\sigma^2}\Big)p_{S_0}(s_0)ds_0\\
    & = \int \exp\Big(-\dfrac{D\Vert s_0-\bar{s}_0\Vert^2 + \sum_{d=1}^D \Vert z_{d,0}-\bar{s}_0\Vert^2}{2\sigma^2}\Big)p_{S_0}(s_0)ds_0\\
    & = \exp\Big(-\dfrac{ \sum_{d=1}^D \Vert z_{d,0}-\bar{s}_0\Vert^2}{2\sigma^2}\Big )\int \exp\Big(-\dfrac{D\Vert s_0-\bar{s}_0\Vert^2} {2\sigma^2}\Big)p_{S_0}(s_0)ds_0
\end{align*}
where $\bar{s}_0 = \frac{1}{D}\sum_{d=1}^D z_{d,0}$. 
\begin{itemize}
    \item For the second and third equation recall that $z_{d,0}=s_0 +\epsilon_{d0}\in \mathbb{R}^{c}$, where $\epsilon_{d0} \sim \mathcal{N}(0,\sigma^2\mathbb{I}_c)$ and $s_0\indep\epsilon_{d0}$. This means that $z_{d,0}|s_0 \sim \mathcal{N}(s_0,\sigma^2\mathbb{I}_c)$. From the following equations follow
    \begin{align*}
        p_{\mathbf{Z}_0\vert S_0}(z_{1,0},\ldots, z_{D,0}\vert s_0)&= \prod_{d=1}^D p_{Z_{d,0}\vert s_0}(z_{d,0}\vert S_0)\\
        &= \prod_{d=1}^D \mathcal{N}(z_{d,0};s_0, \sigma^2\mathbb{I}_c)
    \end{align*}
    \item The fourth equation results from
    \begin{align*}
        \sum_{d=1}^D \Vert z_{d,0}-s_0 \Vert^2 &= \sum_{d=1}^D \Vert z_{d,0}-\bar{s}_0 + \bar{s}_0 -s_0 \Vert^2 =  \sum_{d=1}^D \Big(  \Vert z_{d,0}-\bar{s}_0\Vert ^2 +2\langle  z_{d,0}-\bar{s}_0, \bar{s}_0 -s_0 \rangle + \Vert\bar{s}_0 -s_0 \Vert^2\Big)\\
        & = \sum_{d=1}^D  \Vert z_{d,0}-\bar{s}_0\Vert ^2 +2  \sum_{d=1}^D \langle  z_{d,0}-\bar{s}_0, \bar{s}_0 -s_0 \rangle  +D\Vert\bar{s}_0 -s_0 \Vert^2\\
        & = \sum_{d=1}^D  \Vert z_{d,0}-\bar{s}_0\Vert ^2 +2\Big\langle \sum_{d=1}^D z_{d,0} - D \cdot  \frac{1}{D}\sum_{d=1}^D z_{d,0} , \bar{s}_0 -s_0 \Big\rangle  +D\Vert\bar{s}_0 -s_0 \Vert^2\\
        &=  \sum_{d=1}^D  \Vert z_{d,0}-\bar{s}_0\Vert ^2+D\Vert\bar{s}_0 -s_0 \Vert^2.
    \end{align*}
\end{itemize}

We define $f(\bar{s}_0) = \int \exp\Big(-\dfrac{D\Vert s_0-\bar{s}_0\Vert^2} {2\sigma^2}\Big)p_{S_0}(s_0)ds_0$ similarly to \citep{richard2020modeling}.

Note that 

\begin{align*}
    \Vert z_{d,0}-\bar{s}_0\Vert^2 = \Vert  z_{d,0} \Vert^2 - \frac{2}{D}\sum_{l=1}^D \langle z_{d,0},  z_{l,0} \rangle +\frac{1}{D^2}\sum_{l=1}^D  \sum_{r=1}^D \langle   z_{r,0},   z_{l,0} \rangle.
\end{align*}
Thus, it follows that
\begin{align*}
     \sum_{d=1}^D \Vert  z_{d,0}-\bar{s}_0\Vert^2 &= \sum_{d=1}^D \Big( \Vert   z_{d,0} \Vert^2 - \frac{2}{D}\sum_{l=1}^D \langle   z_{d,0},   z_{l,0} \rangle +\frac{1}{D^2}\sum_{l=1}^D  \sum_{r=1}^D \langle   z_{r,0},   z_{l,0} \rangle\Big)\\
     &=  \sum_{d=1}^D \Vert   z_{d,0} \Vert^2 - \frac{2}{D} \sum_{d=1}^D \sum_{l=1}^D \langle   z_{d,0},   z_{l,0} \rangle +D \frac{1}{D^2}\sum_{l=1}^D  \sum_{r=1}^D \langle   z_{r,0},   z_{l,0} \rangle\\
     & = \sum_{d=1}^D \Vert   z_{d,0} \Vert^2 - \frac{1}{D} \sum_{d=1}^D \sum_{l=1}^D \langle   z_{d,0},   z_{l,0} \rangle
\end{align*}
Collecting all terms together we get
\begin{align*}
     p_{\mathbf{X}}(\mathbf{x})&= \exp \Big(- \dfrac{\sum_{d=1}^D \Vert   z_{d,0} \Vert^2 - \frac{1}{D} \sum_{d=1}^D \sum_{l=1}^D \langle   z_{d,0},   z_{l,0} \rangle}{2\sigma^2}\Big)  f(\bar{s}_0) \prod_{d=1}^D  p_{Z_{d,1}}(z_{d,1}) \prod_{d=1}^D\vert W_d\vert
\end{align*}

The data log-likelihood can be expressed as

\begin{align*}
    \sum_{i=1}^N\log p_{\mathbf{X}}(x_1^i,\ldots, x_D^i)&= \sum_{i=1}^N \Big(- \dfrac{\sum_{d=1}^D \Vert  z_{d,0}^{i} \Vert^2 - \frac{1}{D} \sum_{d=1}^D \sum_{l=1}^D \langle  z_{d,0}^{i},   z_{l,0}^{i} \rangle}{2\sigma^2}\\
    &+\log f(\bar{s}_0^i) +  \sum_{d=1}^D \log  p_{Z_{d,1}}(z_{d,1}^{i}) +\sum_{d=1}^D \log\vert W_d\vert \Big)\\
    &=\sum_{i=1}^N\log f(\bar{s}_0^i) +  \sum_{i=1}^N \sum_{d=1}^D \log  p_{Z_{d,1}}(z_{d,1}^{i})+ N\sum_{d=1}^D \log \vert W_d\vert\\
    &- \frac{1}{2\sigma^2} \Big( \sum_{i=1}^N \sum_{d=1}^D \Vert  z_{d,0}^{i} \Vert^2 -\frac{1}{D} \sum_{i=1}^N \sum_{d=1}^D \sum_{l=1}^D \langle  z_{d,0}^{i},   z_{l,0}^{i} \rangle \Big)\\
    &=\sum_{i=1}^N\log f(\bar{s}_0^i) +  \sum_{i=1}^N \sum_{d=1}^D \log  p_{Z_{d,1}}(z_{d,1}^{i})+ N\sum_{d=1}^D \log \vert W_d\vert\\
    &- \frac{1}{2\sigma^2} \Big(  \sum_{d=1}^D \operatorname{trace}(  Z_{d,0}Z_{d,0}^{\top}) -\frac{1}{D}  \sum_{d=1}^D \sum_{l=1}^D \operatorname{trace}(  Z_{d,0}Z_{l,0}^{\top}) \Big)
\end{align*}
In the case when the data is pre-whitened, it holds that the unknown unmixing matrices are orthogonal, i.e. $W_dW_d^\top=W_d^\top W_d=\mathbb{I}_{k_d}$ and $\vert\det W_d\vert=1,$ and $x_d$ and $z_d$ are uncorrelated. Note that in the main paper we used a different notation for the mixing matrices and sources to stress the difference before and after whitening. This notation is here omitted for simplicity.

 Making similar observations as before we get for the joint probability of the multiple views:
\begin{align*}
p_{\mathbf{X}}(\mathbf{x})=p_{\mathbf{Z}_0}(z_{1,0},\ldots, z_{D,0})\prod_{d=1}^D  p_{Z_{d,1}}(z_{d,1})
\end{align*}
Note that after whitening $z_{d,0}=\alpha(\sigma)(s_0 +\epsilon_{d0})$ with $\alpha(\sigma)=(1+\sigma^2)^{-\frac{1}{2}}$. With similar observations as above we get
\begin{align*}
        p_{\mathbf{Z}_0\vert s_0}(z_{1,0},\ldots, z_{D,0}\vert s_0)&=p_{\mathbf{Z}_0\vert s_0}(\alpha(\sigma)(s_0 +\epsilon_{10}),\ldots,\alpha(\sigma)( s_0 +\epsilon_{D0})\vert s_0) = \prod_{d=1}^D p_{Z_{d,0}\vert S_0}(\alpha(\sigma)(s_0 +\epsilon_{d0})\vert s_0)\\
        &= \prod_{d=1}^D \mathcal{N}(\alpha(\sigma)(s_0 +\epsilon_{d0});s_0, \sigma^2\mathbb{I}_c)= \prod_{d=1}^D \mathcal{N}(z_{d,0};\alpha(\sigma)s_0, \alpha(\sigma)^2\sigma^2\mathbb{I}_c)
    \end{align*}
    
It follows that
\begin{align*}
    p_{\mathbf{Z}_0}(z_{1,0},\ldots, z_{D,0})& = \int  p_{\mathbf{Z}_0\vert s_0}(z_{1,0},\ldots, z_{D,0}\vert s_0) p_{S_0}(s_0) ds_0 \\
    & = \int \Big(\prod_{d=1}^D \mathcal{N}(z_{d,0};\alpha(\sigma)s_0, \alpha(\sigma)^2\sigma^2\mathbb{I}_c)\Big) p_{S_0}(s_0) ds_0\\
  &\propto\int \exp\Big(-\sum_{d=1}^D\frac{\Vert z_{d,0}-\alpha(\sigma)s_0 \Vert^2}{2\alpha(\sigma)^2\sigma^2}\Big)p_{S_0}(s_0)ds_0\\
    & = \int \exp\Big(-\dfrac{D\Vert \alpha(\sigma)s_0-\bar{s}_0\Vert^2 + \sum_{d=1}^D \Vert z_{d,0}-\bar{s}_0\Vert^2}{2\alpha(\sigma)^2\sigma^2}\Big)p_{S_0}(s_0)ds_0\\
    & = \exp\Big(-\dfrac{ \sum_{d=1}^D \Vert z_{d,0}-\bar{s}_0\Vert^2}{2\alpha(\sigma)^2\sigma^2}\Big )\int \exp\Big(-\dfrac{D\Vert \alpha(\sigma)s_0-\bar{s}_0\Vert^2} {2\alpha(\sigma)^2\sigma^2}\Big)p_{S_0}(s_0)ds_0
\end{align*}
where $\bar{s}_0 = \frac{1}{D}\sum_{d=1}^D z_{d,0}$.    We define $f_{\sigma}(\bar{s}_0) = \int \exp\Big(-\dfrac{D\Vert \alpha(\sigma)s_0-\bar{s}_0\Vert^2} {2\alpha(\sigma)^2\sigma^2}\Big)p_{S_0}(s_0)ds_0= \int \exp\Big(-\dfrac{D\Vert s_0-(1+\sigma^2)^{\frac{1}{2}}\bar{s}_0\Vert^2} {2\sigma^2}\Big)p_{S_0}(s_0)ds_0$. For the data log-likelihood we get

\begin{align*}
    \sum_{i=1}^N\log p_{\mathbf{x}}(x_1^i,\ldots, x_D^i)&=\sum_{i=1}^N\log f_{\sigma}(\bar{s}_0^i) +  \sum_{i=1}^N \sum_{d=1}^D \log  p_{Z_{d,1}}(z_{d,1}^{i}) - N\cdot D\cdot 1 \\
    &- \frac{D\cdot c}{2\alpha(\sigma)\sigma^2} + \frac{1}{2D\alpha(\sigma)^2\sigma^2}\sum_{d=1}^D \sum_{l=1}^D \operatorname{trace}(  Z_{d,0}Z_{l,0}^{\top})
\end{align*}
    
It be easily derived from \ref{eq:proofloss} by making the following observations resulting from whitening
\begin{itemize}
    \item $N\sum_{d=1}^D \log \vert W_d\vert = ND$ since $\forall d$ $W_d$ is orthogonal
    \item $\operatorname{trace}(  Z_{d,0}Z_{d,0}^{\top}) = c$ due to Lemma \ref{lem:help}
\end{itemize}
\end{proof}

\newpage
  \section{Real Data Experiment}
\label{app:data}
\subsection{Data Acquisition and Preprocessing}
Our analysis is primarily based on two large gene expression data sets, denoted by (in our code) Dataset1\footnote{The dataset is available at \url{https://www.ncbi.nlm.nih.gov/geo/query/acc.cgi?acc=GSE67023}} \citep{arrieta2015experimentally} with 265 transcriptome datasets obtained from 38 unique experimental designs and Dataset2 \citep{nicolas2012condition}\footnote{The dataset can be found at \url{https://www.ncbi.nlm.nih.gov/geo/query/acc.cgi?acc=GSE27219}} containing 262 samples from 104 different experimental conditions. 

We removed genes  with missing values  from Dataset 1 
and we selected 3994 genes that are present in both datasets. To evaluate our results, we collect a ground truth network from the online database \textit{Subti}Wiki \footnote{See \url{http://www.subtiwiki.uni-goettingen.de/v4/exports}} which consists of 5,952 pairs of regulator and regulated gene. Since our method predicts pairs of co-regulated genes, we transform the ground truth network into an undirected graph  that links genes with a  common regulator. Thus, the ground truth network is stored in the form of an adjacency matrix with entries 1 if the genes are co-regulated and 0 otherwise. 

\subsection{Gene-gene Interaction Pipeline}
The main steps of our method are presented in Algorithm \ref{alg:method}. We infer latent components from the data as described in Appendix \ref{subsubs:int}.  Afterward, we learn a sparse undirected graph from the estimated independent components (see Appendix \ref{subsub:glasso}). 
\subsubsection{Data Integration}
\label{subsubs:int}
Let $X\in \mathbb{R}^{n\times p}$ be a transcriptome data matrix with $n$ samples (or experimental outcomes) and $p$ genes.  We assume that the transcriptome matrix follows a linear latent model, i.e. there exist a matrix $A\in\mathbb{R}^{n\times k}$ and a matrix $S\in\mathbb{R}^{k\times p}$ such that $X=AS.$ The $k$ components can be represent gene expression.  If a group of genes  is either over or under-expressed  in a specific component they are usually assumed to share a functional property in the genome. Additionally, if the components are independent (i.e. a BSS model) we assume that the components represent independent gene pathways, i.e. the components' groups of over/under-expressed genes act independently from each other given the experimental conditions.

\textbf{PLS (OmicsPLS)} This baseline is not a BSS model, i.e. the estimated components are not necessarily independent. We make an  additional assumption that the view-specific sources are orthogonal to the other views.  The model is defined by

\begin{align*}
    X_1=A_1Y_1+ B_1Z_1+E_1\\
     X_2=A_2Y_2+ B_2Z_2+E_2,
\end{align*}
where $Y_1\in\mathbb{R}^{c\times n}$ $Y_2\in\mathbb{R}^{c\times n}$ are the latent variables that are responsible for the joint variation between $X_1$ and $X_2$, i.e. $Y_1$ and $Y_2$ are obtained by solving a CCA problem, and $Z_i\in\mathbb{R}^{k_i-c\times n}$ represent the components that are orthogonal to $X_j$ with $j\neq i$, and $E_i$ is the noise (or residuals). In our application we define $S_i=(Y_i, Z_i)$ for the downstream task of interest.

\subsubsection{Graphical Lasso}
\label{subsub:glasso}
Graphical lasso (glasso) is a maximum likelihood estimator for inferring graph structure in a high-dimensional setting \citep{friedman2008sparse}. This method uses $l_1$ regularization to estimate the precision matrix (or inverse covariance) of a set of random variables from which a graph structure can be determined. The optimization problem which glasso solves can be formalized as follows
\begin{align}
\label{eq:glasso}
\min_{\Theta\succ 0 } -\log\det(\Theta) + \operatorname{tr}(\hat{\Sigma}\Theta)+\lambda \Vert \Theta\Vert_1,
\end{align}
where $\hat{\Sigma}$ is the empirical covariance or correlation matrix and $\Theta:=\Sigma^{-1}$ denotes  the precision matrix. In our setting, the input for the glasso is the Pearson's correlation matrix of the gene representations retrieved with ICA at the preceding step. We can read graph structure from the estimated matrix $\hat{\Theta}$ as follows: if the $ij$ entry of $\hat{\Theta}$ is not 0 (i.e. $\hat{\Theta}_{ij}\neq0$) there is an edge between the genes $i$ and $j$, i.e. the genes might be co-regulated.  We used the \texttt{huge}\footnote{See \url{https://CRAN.R-project.org/package=huge}.} R package for the implementation of graphical lasso.

\subsubsection{Extended EBIC}
There are various criteria for model selection and hyperparameter tuning of glasso models. \cite{chen2008extended}  propose an information criterion for Gaussian graphical models called extended BIC (EBIC) that takes the form 
\begin{align}
\label{eq:ebic}
-\log\det(\Theta(E)) + \operatorname{tr}(\hat{\Sigma}\Theta(E)) +\vert E\vert \log n + 4\vert E \vert \gamma \log p,
\end{align}
where $E$ is the edge set of a candidate graph and $\gamma \in [0,1].$ Models that yield low EBIC scores are preferred. Note that positive values for $\gamma$ lead to sparser graphs. \cite{foygel2010extended}  suggest that   $\gamma=0.5$  is a good choice when no prior knowledge is available. In our experiments, we select the $\lambda$ that minimizes the EBIC score with  $\gamma=0.5$.

\subsubsection{Method}
All steps described above are summarized in the following pseudo code.
\begin{algorithm}[tbh!]
  	\begin{algorithmic}[1]
	\Input{$X_1, \in \mathbb{R}^{n_1\times p},X_2 \in \mathbb{R}^{n_2\times p}$ is a data matrix with $n_1$ and $n_2$ samples and $p$ genes\\$\Lambda$ is a set of regularization parameters\\ $\gamma$ EBIC selection parameter (\ref{eq:ebic})}
	\State Perform a data integration method to obtain $S_1,\in \mathbb{R}^{k_1\times p},S_2\in \mathbb{R}^{k_2\times p}$ 
	\State Concatenate $S=(S_1,S_2)^{\top}\in\mathbb{R}^{k_1+k_2\times p}$
	\State Compute the Pearson correlation matrix $\hat{\Sigma}\in\mathbb{R}^{p\times p}$ of $S$.
	\State Estimate the precision matrices $\{\hat{\Theta}^{\lambda}\}_{\lambda\in\Lambda}$ which solves \ref{eq:glasso} for each $\lambda$ from  the set $\Lambda$
	\State Select  the final $\hat{\Theta}^{out}\in \{\hat{\Theta}^{\lambda}\}_{\lambda\in\Lambda}$ according to EBIC($\gamma$) (see \ref{eq:ebic})
		\Output{the selected $\hat{\Theta}^{out}$}
\end{algorithmic}
\caption{Algorithmic description of the data integration task.}
\label{alg:method}
\end{algorithm}
\newpage

\section{Synthetic Experiments}
\label{app:exp}

\subsection{Amari distance}
The Amari distance \citep{amari1995new} between two invertible matrices $A,B\in \mathbb{R}^{n\times n}$ is defined by
\begin{align*}
    \operatorname{amari}(A, B)&:=\sum_{i=1}^n\Big(\sum_{j=1}^n\dfrac{\vert c_{ij}\vert}{\max_k \vert c_{ik}\vert}-1\Big)+\sum_{j=1}^n\Big(\sum_{i=1}^n\dfrac{\vert c_{ij}\vert}{\max_k \vert c_{kj}\vert}-1\Big), & C&:=A^{-1}B.
\end{align*}
\subsection{Additional Experiments on Synthetic Data}
\label{app:syn}

\textbf{Noisy high-dimensional views.} First, we investigate the effect of noise on the Amari distance in the two-view experiment.  We consider three cases when the noise's standard variation is $\sigma=0.1,0.5,1$.  The results are depicted in Figure \ref{fig:2viewnoisy}.  In the first two cases the results are close to the one discussed in the main paper. As expected, by adding noise with high variance ($\sigma=1$) our method does not converge and affects the quality of the estimated mixing matrices measured with the Amari distance. The whole procedure is repeated 50 times, and the error bars are the $95\%$ confidence intervals based on the independent runs.

\begin{figure}
    \centering
    \includegraphics[scale=0.6]{ 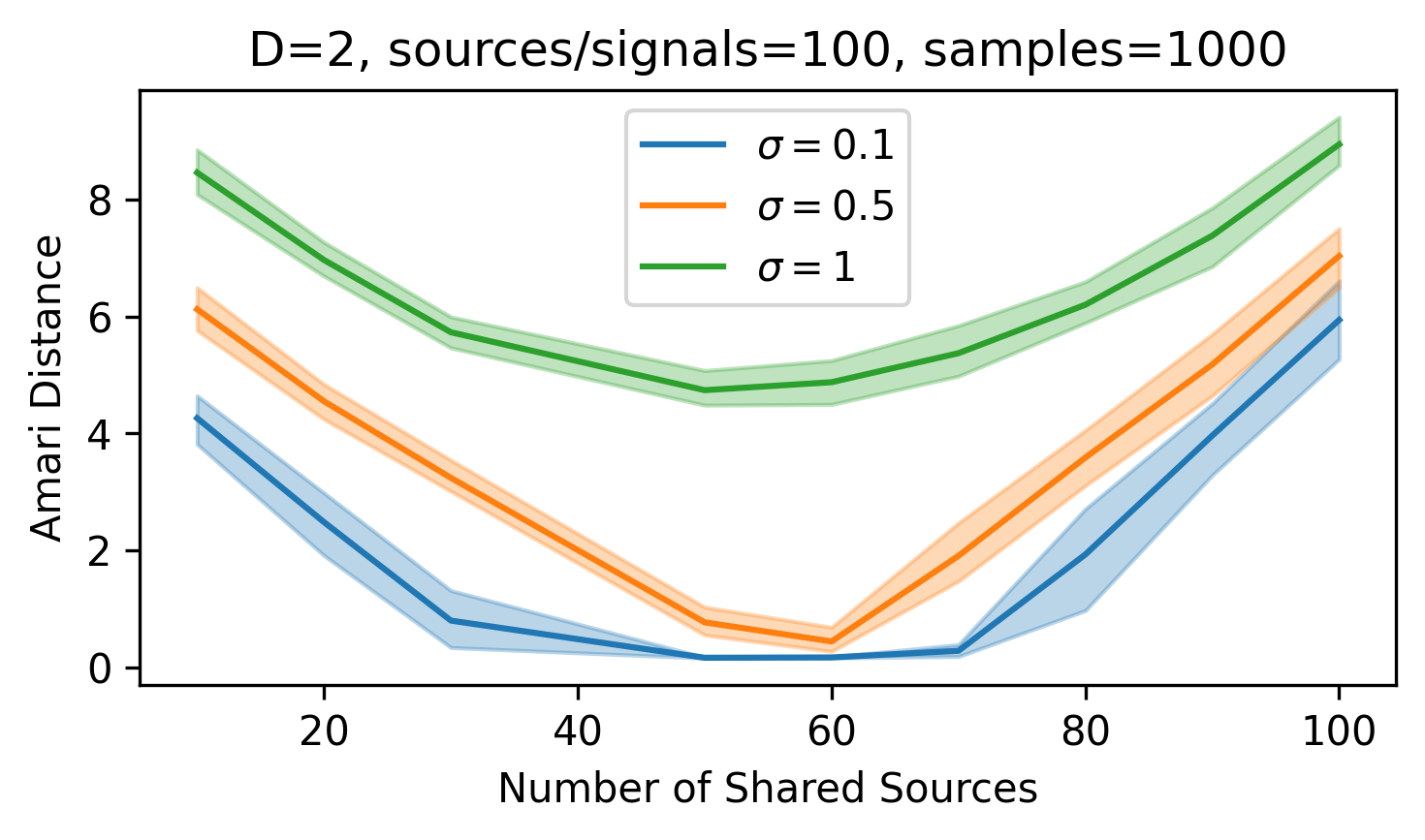}
    \caption{We have the two view case again with number of total sources and observed signals $100$ and number of samples $1000$. We consider three cases of noise standard deviation: $\sigma=0.1,0.5,1.$ As soon as enough shared sources are present (around 60) our method  lower value of Amari distance (the lower the better) in all cases. In the the first two cases ($\sigma=0.1$ or $0.5$) the Amari distance gets closer to $0$ when the shared sources are $60.$ The error bars correspond to $95\%$ confidence intervals based on $50$ independent runs of the experiment. }
    \label{fig:2viewnoisy}
\end{figure}
\begin{figure}
     \centering
     \begin{subfigure}[b]{0.35\linewidth}
         \centering
         \includegraphics[width=\linewidth]{ 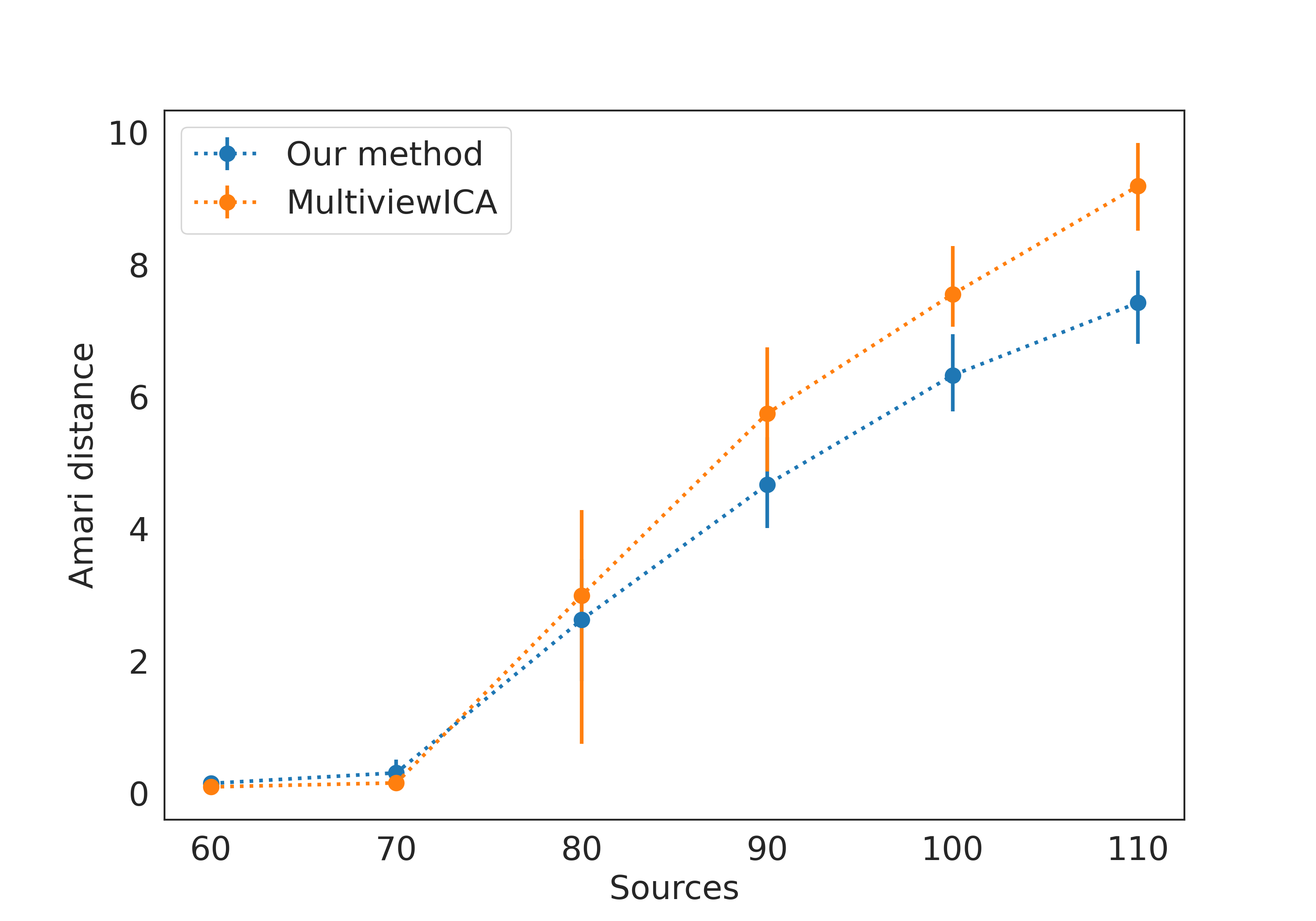}
         \caption{}
         \label{fig:mlevsours}
     \end{subfigure}
     \hfill
     \begin{subfigure}[b]{0.35\linewidth}
         \centering
         \includegraphics[width=\linewidth]{ 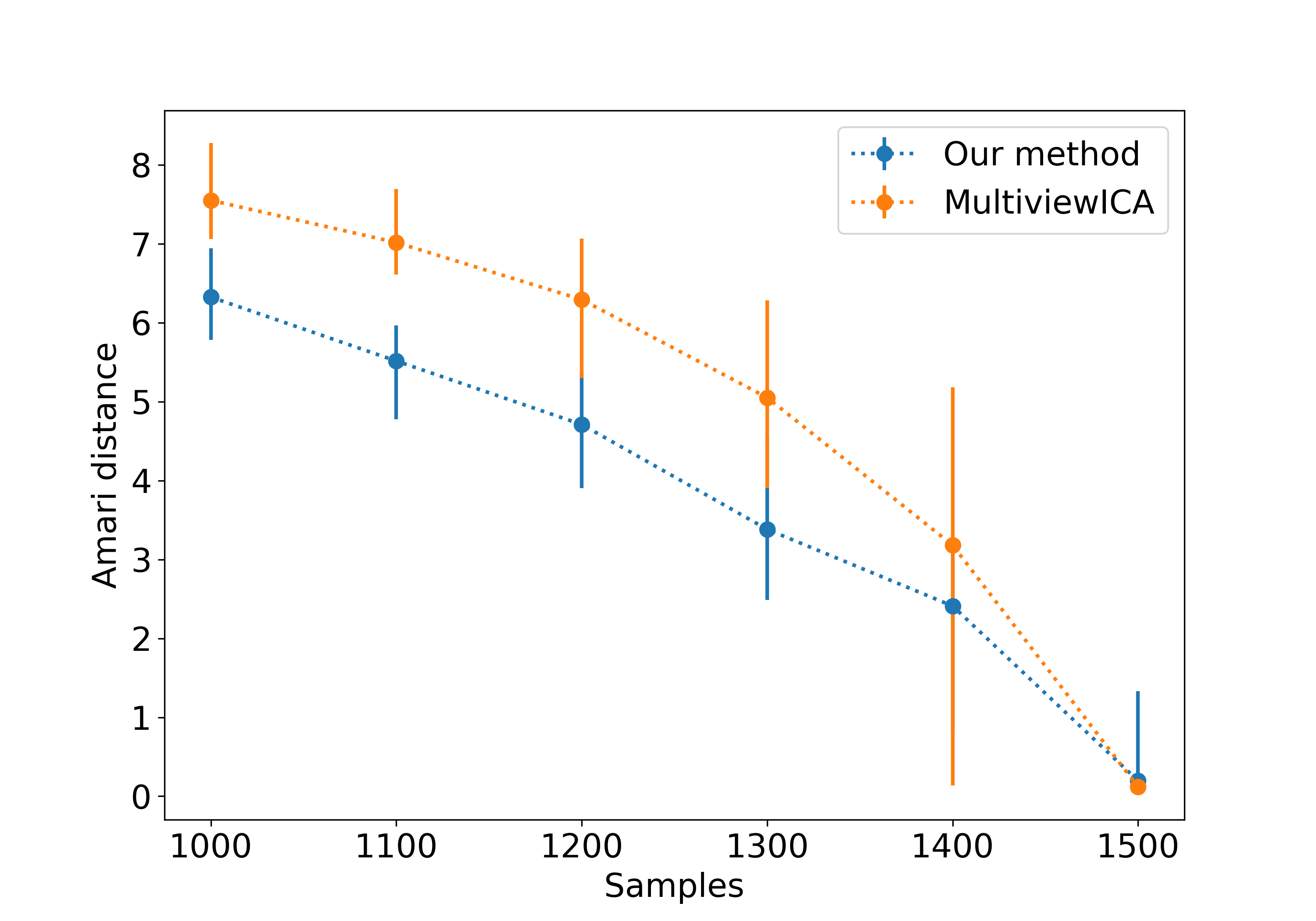}
         \caption{}
         \label{fig:samples}
     \end{subfigure}
     \caption{Comparison of MultiViewICA and our method on a two-view shared response model setting. In Figure \ref{fig:mlevsours} we fix the sample size and measure the Amari distance for sources $60, 70,\ldots 110$. In Figure \ref{fig:samples} the number of sources is set to 100 and we conduct the experiments for different sample sizes (x-axis). It seems that our method outperforms MultiViewICA in both scenarios. }
\end{figure}

\textbf{Objective function motivation.} In the following experiment, we compare MultiViewICA and our method when the observed data is high-dimensional on a two-view shared response model applications, i.e. no individual sources.  The experimental setup allows for comparing standard MLE (MultiViewICA) and MLE after whitening (Our Method). Figure \ref{fig:mlevsours} compares the two methods for fixed sample size $1000$. In Figure \ref{fig:samples} we fixed the number of sources to be 100 and vary the sample size. For all experiments the noise standard deviation is $0.01$. It seems that our method performs better in the case of insufficient data. This could be  empirical evidence that the trace has stronger regularization properties than the MMSE term in the MultiViewICA objective.

\textbf{Choice of $\lambda$}  For this experiment we used data generated from 2 views with 50 individual and 50 shared sources with varying noise standard deviation $\sigma \in\{0.1, 0.5, 1, 2, 10\}$ (x-axis).  Each of the lines in Figure \ref{fig:hyper} correspond to a fixed hyperparameter $\lambda \in \{0.1, 0.5, 1, 2, 10\}$. It can be deduced that for this particular experiment  for $\lambda\geq 0.5$ there is no significant difference in the model performance.
\begin{figure}
\centering
 \includegraphics[scale=0.2]{ 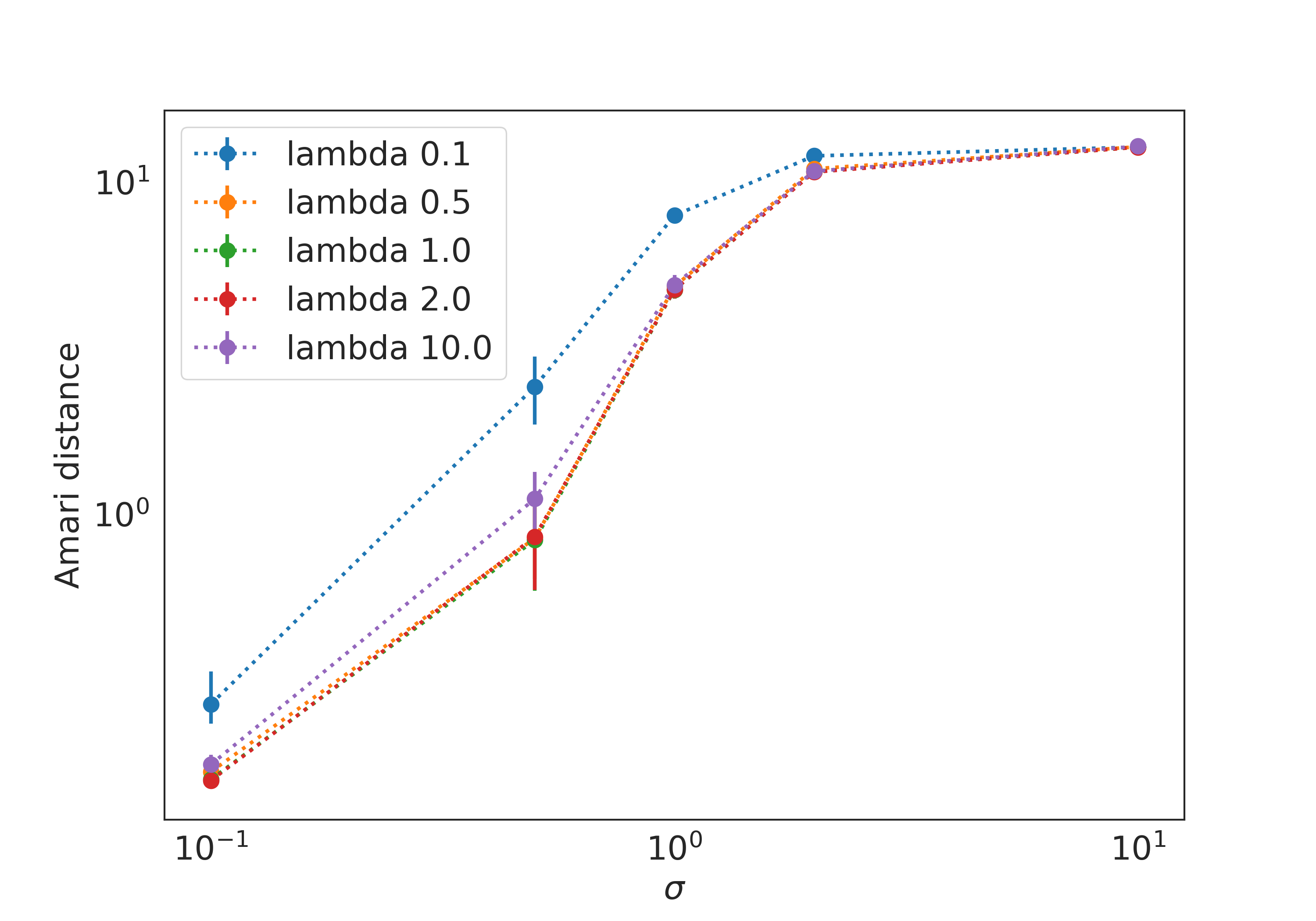}
\caption{Choice of Hyperparameter $\lambda$. The data comes from a two-view model with 50 shared and 50 individual sources per view. The x-axis is represents the noise standard deviation and the y-axis the Amari distance.}
\label{fig:hyper}
\end{figure}

\subsection{ Implementation}
The code for GroupICA, ShICA, MultViewICA is distributed with BSD 3-Clause License. The \texttt{OmicsPLS} R library has a GPL-3 license, the \texttt{scikit-learn} library is distributed with BSD 2-Clause License. 
\newpage
\bibliography{lib}

\end{document}